\def\isdraft{1}
\tikzset{every state/.style={minimum size=0pt}}
\newtheorem{theorem}{Theorem}
\newtheorem{corollary}[theorem]{Corollary}
\newtheorem{fact}[theorem]{Fact}
\newtheorem{lemma}[theorem]{Lemma}
\newtheorem{proposition}[theorem]{Proposition}
\theoremstyle{definition} 
\newtheorem{definition}[theorem]{Definition}
\newtheorem{example}[theorem]{Example}
\newtheorem{notation}[theorem]{Notation}
\newtheorem{remark}[theorem]{Remark}
\title{
	Sequential composition of answer set programs
}
\author{
    Christian Anti\'c
}
\address{
    christian.antic@icloud.com\\
    Vienna University of Technology\\
    Vienna, Austria
}
\begin{document}

\maketitle

\begin{abstract} 
    This paper contributes to the mathematical foundations of logic programming by introducing and studying the sequential composition of answer set programs. On the semantic side, we show that the immediate consequence operator of a program can be represented via composition, which allows us to compute the least model semantics of Horn programs without any explicit reference to operators. As a result, we can characterize answer sets algebraically, which further provides an algebraic characterization of strong and uniform equivalence which is appealing. This bridges the conceptual gap between the syntax and semantics of an answer set program in a mathematically satisfactory way. The so-obtained algebraization of answer set programming allows us to transfer algebraic concepts into the ASP-setting which we demonstrate by introducing the index and period of an answer set program as an algebraic measure of its cyclicality. The technical part of the paper ends with a brief section introducing the algebraically inspired novel class of aperiodic answer set programs strictly containing the acyclic ones. In a broader sense, this paper is a first step towards an algebra of answer set programs and in the future we plan to lift the methods of this paper to wider classes of programs, most importantly to higher-order and disjunctive programs and extensions thereof.
\end{abstract}

\section{Introduction}

Non-monotonic reasoning is an essential part of human intelligence prominently formalized in artificial intelligence research via answer set programming \cite<see e.g.>{Brewka11,Lifschitz19,Baral03} among other formalizations \cite<see e.g.>{Antoniou97}. Answer set programs are rule-based systems with the rules and facts being written in a sublanguage of predicate or propositional logic extended by a unary non-monotonic operator ``$not$'' denoting \textit{negation as failure} (or \textit{default negation}) \cite{Clark78}. While each monotone (i.e., negation-free) answer set program has a unique least Herbrand model (with the least model semantics \cite{vanEmden76} being the accepted semantics for this class of programs), for general answer set programs a large number of different purely declarative semantics exist. Many of it have been introduced some 30 years ago, among them the \textit{answer set semantics} \cite{Gelfond91}. With the emergence of efficient solvers such as DLV \cite{Leone06}, Smodels \cite{Simons02}, Cmodels \cite{Giunchiglia06}, and Clasp \cite{Gebser12}, programming under answer set semantics led to a predominant declarative problem solving paradigm, called \textit{answer set programming} (or \textit{ASP}) \cite{Marek99,Lifschitz02}. Answer set programming has a wide range of applications and has been successfully applied to various AI-related subfields such as planning and diagnosis (for a survey see \citeA{Brewka11,Eiter09,Baral03}). Driven by this practical needs, a large number of extensions of classical answer set programs have been proposed, e.g. \textit{aggregates} \cite<cf.>{Faber04,Faber11,Pelov04}, \textit{choice rules} \cite{Niemela99}, \textit{dl-atoms} \cite{Eiter08a}, and general \textit{external atoms} \cite{Eiter05}. For excellent introductions to the field of answer set programming we refer the reader to \citeA{Brewka11,Baral03,Eiter09}.

The \textbf{algebraization} of parts of theoretical computer science has a long history beginning with the algebraic theory of finite automata already well established in the late 1960s \cite<see e.g.>{Ginzburg68,Arbib68}. The \textbf{algebraization of logic programming} started in the mid 1980s with \citeS{OKeefe85} work towards an algebra for constructing logic programs, implicitly continued by work on \textbf{modular logic programming} in the 1980s and 1990s \shortcite<see e.g.>{Bossi96,Brogi92,Brogi92a,Brogi95,Brogi99,Bugliesi94,Dong90,Gaifman89,Hill94,Mancarella88,OKeefe85}, and more recently on modular answer set programming \shortcite<see e.g.>{DaoTran09,Oikarinen06,Oikarinen06a}. 

The \textbf{purpose of this paper} is to add to the logic programmer's repertoire another \textbf{modularity operation} in the form of the \textit{\textbf{sequential composition}} of answer set programs which is naturally induced by their rule-like structure, and to initiate the \textbf{algebraization of answer set programming}. The main \textbf{motivation} of this work is not only to obtain an algebraic description of the answer set semantics (\prettyref{t:T_P}) --- thus adding one more characterization to the thirteen listed by \citeA{Lifschitz10} --- and notions of equivalence (Corollaries \ref{c:strong}, \ref{c:uniform}), but also to pave the way for algebraic notions to be introduced into the ASP-setting as explained below.

For this, we lift the recently introduced concepts and results in \citeA{Antic21-1} from propositional Horn logic programs to answer set programs containing negation as failure. This task turns out to be non-trivial due to the intricate algebraic properties of composing negation as failure occurring in rule bodies. The rule-like structure of answer set programs naturally induces the compositional structure of a unital magma (a set closed under a given binary operation) on the space of all answer set programs as we will demonstrate in this paper. Specifically, we show that the notion of composition gives rise to a family of finite magmas and algebras of answer set programs (\prettyref{t:ASP}), which we will call \textit{\textbf{ASP magma}} and \textit{\textbf{ASP algebra}} in this paper. We also show that the restricted class of proper Krom-Horn programs, which only contain rules with exactly one body atom, yields an idempotent semiring (\prettyref{t:Krom}).\footnote{Krom-Horn programs are called ``unary'' in \citeA{Janhunen06}.} On the semantic side, we show that the van Emden-Kowalski immediate consequence operator of a program can be represented via composition (Theorem \ref{t:T_P}), which allows us to compute the answer set semantics of programs without any explicit reference to operators (Theorem \ref{t:AS}). This bridges the conceptual gap between the syntax and semantics of an answer set program in a mathematically satisfactory way. Moreover, it provides an algebraic characterization of strong (\prettyref{c:strong}) and uniform equivalence (\prettyref{c:uniform}). 

Due to their global nature, answer sets tend to have limitations when it comes to modular programming. The rough observation is that the defining rules of each atom must be treated together and the same applies to positively interdependent atoms. This means that in general, the answer sets of a program cannot be computed from the answer sets of its factors via composition. We are thus more interested in the algebraic structure of the \textit{syntax} of programs. It can thus be said that the paper is part of \textbf{algebraic answer set programming} and not so much of modular answer set programming. That is, our inspirations come from algebraically syntactic and not so much from semantic considerations. For example, in \prettyref{§:Index} we will introduce the \textit{\textbf{index}} and \textit{\textbf{period}} of an answer set program in analogy to the same concepts in semigroup theory \cite<see>[p.10]{Howie03}. Numerous examples demonstrate that these notions provide a suitable algebraization of the cyclicality of a program. We then end the technical part of the paper by introducing the purely algebraically inspired class of \textit{\textbf{aperiodic answer set programs}} strictly containing the acyclic ones \cite{Apt91}, which hopefully convinces the reader of the benefit of having an algebraic theory of answer set programs. In fact, given the counterexample to an acyclic program in \prettyref{e:not_aperiodic}, it seems to us that the notion of an aperiodic program better captures the intuition of an ``acyclic'' program than the usual definition in terms of a level mapping. This is an instance of a more general future research program of interpreting algebraic concepts in the ASP-setting. For example, in \citeA[§Related work]{Antic21-1}, it has been observed that from a purely mathematical point of view, sequential composition of propositional Horn logic programs is very similar to composition in multicategories \cite<cf.>[§ 2.1]{Leinster04} and it remains to be seen whether interesting connections between those two seemingly distant areas emerge.

From an artificial intelligence perspective, we obtain a novel and useful algebraic operation for the composition and decomposition of answer set programs, which in combination with the abstract algebraic framework of analogical proportions in \citeA{Antic22} can be used for answer set program synthesis similar to \citeA{Antic23-23}, which remains a promising line of future work (see \prettyref{§:FW}).

In a broader sense, this paper is a further step towards an algebra of rule-based logical theories and in the future we plan to adapt and generalize the methods of this paper to wider classes of programs, most importantly for higher-order logic programs \cite{Chen93,Miller12} and disjunctive answer set programs \cite{Eiter97} and extensions thereof.

\section{Preliminaries}

This section recalls the syntax and semantics of answer set programming, and some algebraic structures occurring in the rest of the paper.

\subsection{Algebraic structures}

We define the composition $f\circ g$ of two functions $f$ and $g$ by $(f\circ g)(x):=f(g(x))$. Given two sets $A$ and $B$, we will write $A\subseteq_k B$ in case $A$ is a subset of $B$ with $k$ elements, for some non-negative integer $k$. 
We denote the \textit{\textbf{identity function}} on a set $A$ by $Id_A$.

A \textit{\textbf{partially ordered set}} (or \textit{\textbf{poset}}) is a set $L$ together with a reflexive, transitive, and anti-symmetric binary relation $\leq$ on $L$. A \textit{\textbf{prefixed point}} of an operator $f$ on a poset $L$ is any element $x\in L$ such that $f(x)\leq x$; moreover, we call any $x\in L$ a \textit{\textbf{fixed point}} of $f$ if $f(x)=x$. A \textit{\textbf{lattice}} is a poset where any two elements $x,y\in L$ have a unique supremum and a unique infimum in $L$, and a lattice is \textit{\textbf{complete}} iff any subset of $L$ has an upper and a lower bound in $L$.

A \textit{\textbf{magma}} is a set $M$ together with a binary operation $\cdot$ on $M$. We call $(M,\cdot,1)$ a \textit{\textbf{unital magma}} if it contains a unit element 1 such that $1x = x1 = x$ holds for all $x\in M$. A \textit{\textbf{semigroup}} is a magma $(S,\cdot)$ in which $\cdot$ is associative. A \textit{\textbf{monoid}} is a semigroup containing a unit element 1 such that $1x=x1=x$ holds for all $x$. A \textit{\textbf{group}} is a monoid which contains an inverse $x^{-1}$ for every $x$ such that $xx^{-1}=x^{-1}x=1$. A \textit{\textbf{left}} (resp., \textit{\textbf{right}}) \textit{\textbf{zero}} is an element $0$ such that $0x=0$ (resp., $x0=0$) holds for all $x\in S$. An \textit{\textbf{ordered semigroup}} is a semigroup $S$ together with a partial order $\leq$ that is compatible with the semigroup operation, meaning that $x\leq y$ implies $zx\leq zy$ and $xz\leq yz$ for all $x,y,z\in S$. An \textit{\textbf{ordered monoid}} is defined in the obvious way. 
A non-empty subset $I$ of $S$ is called a \textit{\textbf{left}} (resp., \textit{\textbf{right}}) \textit{\textbf{ideal}} if $SI\subseteq I$ (resp., $IS\subseteq I$), and a (\textit{\textbf{two-sided}}) \textit{\textbf{ideal}} if it is both a left and right ideal.  

A \textit{\textbf{seminearring}} is a set $S$ together with two binary operations $+$ and $\cdot$ on $S$, and a constant $0\in S$, such that $(S,+,0)$ is a monoid and $(S,\cdot)$ is a semigroup satisfying the following laws:
\begin{enumerate}
    \item $(x+y)\cdot z=x\cdot z+y\cdot z$ for all $x,y,z\in S$ (right-distributivity); and
    \item $0\cdot x=0$ for all $x\in S$.
    \end{enumerate} The seminearring $S$ is called \textit{\textbf{idempotent}} if $x+x=x$ holds for all $x\in S$. A \textit{\textbf{semiring}} is a seminearring $(S,+,\cdot,0,1)$ such that $(S,\cdot,1)$ is a monoid, $+$ is commutative, and additionally to the laws of a seminearring the following laws are satisfied:
    \begin{enumerate}
    \item $x\cdot (y+z)=x\cdot y+x\cdot z$ for all $x,y,z\in S$ (left-distributivity); and
    \item $x\cdot 0=0$ for all $x\in S$.
\end{enumerate}

\subsection{Answer set programs}

We recall the syntax and semantics of answer set programming by mainly following the lines of \citeA{Baral03}.

\subsubsection{Syntax}

In the rest of the paper, $A$ denotes a finite alphabet of propositional atoms not containing the special symbols $\mathbf t$ (true) and $\mathbf f$ (false). A \textit{\textbf{literal}} is either an atom $a$ or a negated atom $not\ a$, where ``$not$'' denotes \textit{\textbf{negation as failure}} \cite{Clark78}.

An (\textit{\textbf{answer set}}) \textit{\textbf{program}} over $A$ is a finite set of \textit{\textbf{rules}} of the form
\begin{align}\label{equ:r} 
    a_0\leftarrow a_1,\ldots,a_\ell,not\ a_{\ell+1},\ldots,not\ a_k,\quad k\geq\ell\geq 0,
\end{align} where $a_0,\ldots,a_k\in A$ are atoms, and we denote the set of all answer set programs over $A$ by $\mathbb P_A$ or simply by $\mathbb P$ in case $A$ is understood. It will be convenient to define, for a rule $r$ of the form \prettyref{equ:r},
\begin{align*} 
    head(r) &:= \{a_0\},\\
    body(r) &:= \{a_1,\ldots,a_\ell,not\ a_{\ell+1},\ldots,not\ a_k\},\\
\end{align*} extended to sequences of programs, for $n\geq 1$, by 
\begin{align*} 
    head(P_1,\ldots,P_n) &:= \bigcup_{r\in P_1\cup\ldots\cup P_n}head(r)\\
    body(P_1,\ldots,P_n) &:= \bigcup_{r\in P_1\cup\ldots\cup P_n}body(r).
\end{align*} Moreover, we define
\begin{align*} 
    body_+(r) := \{a_1,\ldots,a_\ell\} \quad\text{and}\quad body_-(r) := \{a_{\ell+1},\ldots,a_k\},
\end{align*} extended to sequences of rules, for $n\geq 1$, by
\begin{align*} 
    body_+(r_1,\ldots,r_n) := \bigcup_{i=1}^n body_+(r_i) \quad\text{and}\quad body_-(r_1,\ldots,r_n) := \bigcup_{i=1}^n body_-(r_i).
\end{align*}

The \textit{\textbf{size}} of a rule $r$ of the form \prettyref{equ:r} is $k$ and denoted by $sz(r)$. A rule $r$ of the form \prettyref{equ:r} is \textit{\textbf{positive}} or \textit{\textbf{Horn}} if $\ell=k$, and \textit{\textbf{negative}} if $\ell=0$. A program is \textit{\textbf{positive}} or \textit{\textbf{Horn}} (resp., \textit{\textbf{negative}}) if it contains only positive (resp., negative) rules. 
Facts are the only kind of rules which are positive and negative at the same time. Define the positive and negative part of $r$ respectively by
\begin{align*} 
    r_+ := a_0\leftarrow a_1,\ldots,a_\ell \quad\text{and}\quad r_- := a_0\leftarrow not\ a_{\ell+1},\ldots,not\ a_k.
\end{align*} extended to programs rule-wise by
\begin{align*} 
    P_+ := \{r_+\mid r\in P\} \quad\text{and}\quad P_- := \{r_-\mid r\in P\}.
\end{align*}

Moreover, define the \textit{\textbf{hornification}} of $r$ by $$horn(r):=a_0\leftarrow a_1,\ldots,a_k,$$ extended to programs rule-wise via
\begin{align*} 
    horn(P) := \{horn(r)\mid r\in P\}.
\end{align*}

A \textit{\textbf{fact}} is a rule with empty body and a \textit{\textbf{proper}} rule is a rule which is not a fact. The facts and proper rules of a program $P$ are denoted by $facts(P)$ and $proper(P)$, respectively. 

A Horn rule $r$ is called \textit{\textbf{Krom}}\footnote{Krom-Horn rules where first introduced and studied by \citeA{Krom67}; they are known in ASP as \textit{\textbf{unary}} programs \cite{Janhunen06}.} if it has at most one body atom. 
A Horn program is \textit{\textbf{Krom}} if it contains only Krom rules. 

We call a program \textit{\textbf{minimalist}} if it contains at most one rule for each rule head.

A Horn program $H$ is \textit{\textbf{acyclic}} \cite{Apt91} if there is a mapping $\ell: A\to \mathbb N$ such that for each rule $r\in P$, we have $\ell(head(r))> \ell(body(r))$, and in this case we call $\ell$ a \textit{\textbf{level mapping}} for $H$. 

\vspace*{0.5cm}

An \textit{\textbf{$\lor$-program}} over $A$ is a finite set of \textit{\textbf{$\lor$-rules}} of the form
\begin{align*} 
    a\leftarrow B_1\lor\ldots\lor B_m,\quad m\geq 0,
\end{align*} where $a\in A$ is an atom and $B_1,\ldots,B_m$ are sets of literals. $\lor$-Programs containing disjunction in rule bodies will only occur as intermediate steps in the computation of composition below.

Define the \textit{\textbf{dual}} of a Horn program $H$ by\footnote{This is not to be confused with the dual programs in \citeA{Fichte15}.}
\begin{align*} 
    H^d := facts(H)\cup\{a\leftarrow head(r)\mid r\in proper(P), a\in body(r)\}.
\end{align*} Roughly, we obtain the dual of a Horn program by reversing all the arrows of its proper rules: It consists of the facts in $H$ together with $a_i \leftarrow a_0$, $1\leq i\leq k$, for each proper rule $a_0 \leftarrow a_1,\ldots,a_k\in H$; that is, every proper rule of the form $a_0 \leftarrow a_1, \ldots, a_k$ amounts to the rules $a_1 \leftarrow a_0, \ldots, \quad\ldots\quad a_k \leftarrow a_0$.

\subsubsection{Semantics}\label{§:Semantics}

An \textit{\textbf{interpretation}} is any subset of $A$. The \textit{\textbf{entailment relation}} with respect to an interpretation $I$ and a program $P$ is defined inductively as follows:
\begin{enumerate}
    \item for an atom $a\in A$, $I\models a$ if $a\in I$, and $I\models not\ a$ if $a\not\in I$;

    \item for a set of literals $L$, $I\models L$ if $I\models l$ for every $l\in L$;

    \item for a rule $r$ of the form \prettyref{equ:r}, $I\models r$ if $I\models head(r)$ or $I\not\models body(r)$, which is equivalent to $body_+(r)\not\subseteq I$ or $I\cap body_-(r)\neq\emptyset$;

    \item $I\models P$ if $I\models r$ holds for each rule $r\in P$. In case $I\models P$, we call $I$ a \textit{\textbf{model}} of $P$.
\end{enumerate} It is well-known that the space of all models of a \textit{Horn} program $H$ forms a complete lattice containing a \textit{\textbf{least model}} denoted by $LM(H)$.

Define the \textit{\textbf{Gelfond-Lifschitz reduct}} of $P$ with respect to $I$ by the Horn program
\begin{align*} 
    gP^I := \{r_+\mid r\in P,\, I\cap body_-(r)=\emptyset\}.
\end{align*} Moreover, define the \textit{\textbf{left}} and \textit{\textbf{right reduct}} of $P$, with respect to some interpretation $I$, by
\begin{align*} 
    ^IP := \{r\in P\mid I\models head(r)\}\quad\text{and}\quad P^I:=\{r\in P\mid I\models body(r)\}.
\end{align*} Of course, our notion of right reduct is identical to the \textit{\textbf{Faber-Leone-Pfeifer reduct}} \cite{Faber11} well-known in answer set programming.

An interpretation $I$ is an \textit{\textbf{answer set}} of $P$ if $I$ is the least model of $gP^I$.

Define the \textit{\textbf{van Emden-Kowalski operator}} of $P$, given an interpretation $I$, by
\begin{align*} 
    T_P(I) := \{head(r)\mid r\in P:I\models body(r)\}.
\end{align*} It is well-known that the least model semantics of a Horn program coincides with the least fixed point of its associated van Emden-Kowalski operator \cite{vanEmden76}. We call an interpretation $I$ a \textit{\textbf{supported model}} of $P$ if $I$ is a fixed point of $T_P$.

The following results are answer set programming folklore \cite<see e.g.>{Apt90,Lloyd87}:

\begin{theorem}\label{t:semantics} Let $H$ be a Horn program and let $P$ be an answer set program.
\begin{enumerate}
    \item An interpretation $I$ is a model of $P$ iff $I$ is a prefixed point of $T_P$.
    \item The least model of $H$ coincides with the least fixed point of $T_H$.
    \item An interpretation $I$ is an answer set of $P$ iff $I$ is the least fixed point of $T_{gP^I}$. 
    \item An interpretation $I$ is an answer set of $P$ iff $I$ is a subset minimal model of $P^I$.
\end{enumerate}
\end{theorem}

We say that $P$ and $R$ are (i) \textit{\textbf{equivalent}} if $P$ and $R$ have the same answer sets; (ii) \textit{\textbf{subsumption equivalent}} if $T_P = T_R$ \cite{Maher88}; (iii) \textit{\textbf{uniformly equivalent}} if $P\cup I$ is equivalent to $R\cup I$ for any interpretation $I$ \cite{Eiter03}; and (iv) \textit{\textbf{strongly equivalent}} if $P\cup Q$ is equivalent to $R\cup Q$ for any program $Q$ \cite{Lifschitz01}.

\section{Sequential composition}\label{§:C}

This is the main section of the paper. Here we define the sequential composition of answer set programs and prove the main theorem of this work --- \prettyref{t:ASP} --- that it induces the structure of a unital magma on the space of all answer set programs.

Before we give the formal definition of composition below, we shall first introduce some auxiliary constructions. The goal is to extend the ``$not$'' operator from atoms to programs, which will be essential in the definition of composition below. 

\begin{notation} In the rest of the paper, $P$ and $R$ denote answer set programs, and $I$ denotes an interpretation over some joint finite alphabet of propositional atoms $A$.
\end{notation}

First, define the \textit{\textbf{$\mathbf{tf}$-operator}} by
\begin{align*} 
    \mathbf{tf}(P) := proper(P)\cup\{a\leftarrow\mathbf t\mid a\in P\}\cup \{a \leftarrow \mathbf f\mid a\in A-head(P)\}.
\end{align*} Roughly, the $\mathbf{tf}$-operator replaces every fact $a$ in $P$ by $a\leftarrow\mathbf t$, and it makes every atom $a$ not occurring in any rule head of $P$ explicit by adding $a\leftarrow\mathbf f$. This transformation will be needed in the treatment of negation in the definition of composition. The reader should interpret $a\leftarrow\mathbf t$ as ``$a$ is true'' and $a\leftarrow\mathbf f$ as ``$a$ is false'', similar to truth value assignments in imperative programming. Notice that the $\mathbf{tf}$-operator depends implicitly on the underlying alphabet $A$.

Second, define the \textit{\textbf{overline operator}} by
\begin{align*} 
    \overline{P} := \{head(r)\leftarrow (body(r)-\{\mathbf t\})\mid r\in P\} - \{r\mid\mathbf f\in body(r)\}.
\end{align*} Roughly, the overline operator removes every occurrence of $\mathbf t$ from rule bodies and eliminates every rule containing $\mathbf f$ and is therefore ``dual'' to the $\mathbf{tf}$-operator.

Third, define the \textit{\textbf{$\lor$-operator}} by\footnote{Recall from \prettyref{§:Semantics} that $^{head(r)}P$ is the left reduct of $P$ with respect to $head(r)$.}
\begin{align*} 
    P^\lor := \left\{head(r) \leftarrow \bigvee body\left({^{head(r)}}P\right)\;\middle|\; r\in P\right\}.
\end{align*} Intuitively, the $\lor$-program $P^\lor$ contains exactly one $\lor$-rule for each head atom in $P$ containing the disjunction of all rule bodies with the same head atom. Notice the similarity to the well-known completion of a program \cite{Clark78}.

Fourth, define the negation of an $\lor$-program inductively as follows. First, define
\begin{align*} 
    not\ \mathbf t := \mathbf f \quad\text{and}\quad not\ \mathbf f:=\mathbf t,
\end{align*} extended to a literal $L$ by
\begin{align*} 
    not\ L := \begin{cases}
        a & \text{if } L=not\ a,\\
        not\ a & \text{if } L=a.
    \end{cases}
\end{align*} Then, for an $\lor$-rule $r$ of the form
\begin{align*} 
    head(r) \leftarrow \{L_1^1,\ldots,L_{k_1}^1\}\lor\ldots\lor\{L_1^{n},\ldots,L_{k_n}^n\},
\end{align*} where each $L_i^j$ is a literal, $1\leq j\leq n$, $n\geq 1$,\footnote{We assume here that the $\lor$-rule $r$ contains at least one body literal possibly consisting only of a truth value among $\mathbf t$ and $\mathbf f$. This is consistent with our use of negation below where each atom $a$ is first translated via the $\mathbf{tf}$-operator into $a\leftarrow\mathbf t$.} $1\leq i\leq k_j$, $k_j\geq 1$, define
\begin{align*} 
    not\ r
        &:= \bigcup_{1\leq i_1\leq k_1}\dots\bigcup_{1\leq i_n\leq k_n}\left\{head(r)\leftarrow not\ L_{i_1}^1,\ldots,not\ L_{i_n}^n\right\}.
\end{align*} Now define the \textit{\textbf{negation}} of a program $P$ rule-wise by
\begin{align*} 
    not\ P := \left\{\overline{not\ r}\;\middle|\; r\in\mathbf{tf}(P)^\lor\right\}.
\end{align*} Notice that the $not$-operator depends implicitly on the underlying alphabet $A$. 

We are now ready to introduce the main notion of the paper:

\begin{definition}\label{d:circ} Define the (\textit{\textbf{sequential}}) \textit{\textbf{composition}} of $P$ and $R$ by
\begin{align}\label{equ:circ} 
    P\circ R := \left\{head(r)\leftarrow body(S,N)\;\middle|\;
    \begin{array}{l}
        r\in P\\
        S\subseteq_{sz(r_+)} R\\
        N\subseteq_{sz(r_-)} not\ R\\
        head(S) = body_+(r)\\
        head(N) = body_-(r)
    \end{array}\right\}.
\end{align} We will often write $PR$ instead of $P\circ R$ in case composition is understood.
\end{definition}

Roughly, the composition of $P$ and $R$ is computed by resolving all body literals in $P$ with ``matching'' rule heads of $R$, where the intermediate programs $S$ and $N$ ``bridge'' the positive and negative part of a rule in $P$ with the bodies in $R$, respectively. 

Before proceeding with formal constructions and results, we first want to illustrate composition with the following simple example:

\begin{example} Consider the rule $r$ and program $R$ given by
\begin{align*} r:=a\leftarrow not\ b \quad\text{and}\quad R:= \left\{
\begin{array}{l}
    b\leftarrow not\ c,not\ d\\
    b\leftarrow c,d
\end{array}
\right\}.
\end{align*} We wish to compute $\{r\}\circ R$. For this, we first compute
\begin{align*} 
    {\bf tf}(R)^\lor = \left\{
    \begin{array}{l}
        a\leftarrow\mathbf f\\
        b\leftarrow\{not\ c,not\ d\}\lor\{c,d\}\\
        c\leftarrow\mathbf f\\
        d\leftarrow\mathbf f
    \end{array}
    \right\}
\end{align*} and then
\begin{align*} 
    not\ R &= \left\{
    \begin{array}{l}
        \overline{a\leftarrow not\ \mathbf f}\\
        \overline{b\leftarrow not(\{not\ c,not\ d\}\lor\{c,d\})}\\
        \overline{c\leftarrow not\ \mathbf f}\\
        \overline{d\leftarrow not\ \mathbf f}
    \end{array}
    \right\}\\
    &= \left\{
    \begin{array}{l}
        a\\
        \overline{b\leftarrow not\{not\ c,not\ d\},not\{c,d\}}\\
        c\\
        d
    \end{array}
    \right\}\\
    &= \left\{
    \begin{array}{l}
        a\\
        \overline{b\leftarrow \{c\lor d\},\{not\ c\lor not\ d\}}\\
        c\\
        d
    \end{array}
    \right\}= \left\{
    \begin{array}{l}
        a\\
        b\leftarrow c,not\ c\\
        b\leftarrow d,not\ c\\
        b\leftarrow c,not\ d\\
        b\leftarrow d,not\ d\\
        c\\
        d
    \end{array}
    \right\}.
\end{align*} We can now compute
\begin{align*} 
    \{r\}\circ R = \left\{
    \begin{array}{l}
        a\leftarrow c,not\ c\\
        a\leftarrow d,not\ c\\
        a\leftarrow c,not\ d\\
        a\leftarrow d,not\ d
    \end{array}
    \right\}.
\end{align*}
\end{example}

Notice that we can reformulate composition as
\begin{align}\label{equ:bigcup} 
    P\circ R = \bigcup_{r\in P}(\{r\}\circ R),
\end{align} which directly implies right-distributivity of composition, that is,
\begin{align}\label{equ:P_cup_R_Q} 
    (P\cup R)\circ Q=(P\circ Q)\cup (R\circ Q)\quad\text{holds for all programs }P,Q,R.
\end{align} However, the following counter-example shows that left-distributivity fails in general:  
\begin{align*} 
    \{a\leftarrow b,c\}\circ(\{b\}\cup\{c\})=\{a\}
\end{align*} whereas
\begin{align*} 
    (\{a\leftarrow b,c\}\circ\{b\})\cup(\{a\leftarrow b,c\}\circ\{c\})=\emptyset.
\end{align*} 
The situation changes for Krom-Horn programs (see \prettyref{t:Krom}).

We can write $P$ as the union of its facts and proper rules, that is,
\begin{align}\label{equ:P} 
    P=facts(P)\cup proper(P).
\end{align} Hence, we can rewrite the composition of $P$ and $R$ as
\begin{align}\label{equ:facts_proper} 
    P\circ R &= (facts(P)\cup proper(P))R \stackrel{\prettyref{equ:P_cup_R_Q}}=facts(P)R\cup proper(P)R = facts(P)\cup proper(P)R,
\end{align} which shows that the facts in $P$ are preserved by composition, that is, we have
\begin{align}\label{equ:facts_PR} 
    facts(P)\subseteq facts(P\circ R).
\end{align}

The following example shows that, unfortunately, composition is \textit{not} associative even in the Horn case (but see \prettyref{t:ASP}).

\begin{example}\label{exa:non-associativity} Consider the Horn rule
\begin{align*} 
    r := a\leftarrow b,c,
\end{align*} and the Horn programs
\begin{align*} 
    P := \left\{
    \begin{array}{l}
        b\leftarrow b\\
        c\leftarrow b,c
    \end{array}
    \right\} \quad\text{and}\quad R:= \left\{
    \begin{array}{l}
        b\leftarrow d\\
        b\leftarrow e\\
        c\leftarrow f
    \end{array}
\right\}.
\end{align*} A simple computation yields
\begin{align*} 
    \{r\}(PR) = \left\{
    \begin{array}{l}
        a\leftarrow d,f\\
        a\leftarrow e,f\\
        a\leftarrow d,e,f\\
    \end{array}
    \right\}\neq \left\{
    \begin{array}{l}
      a\leftarrow d,f\\
      a\leftarrow e,f
    \end{array}
    \right\} = (\{r\}P)R.
\end{align*}
\end{example}

\begin{definition} Define the \textit{\textbf{unit program}} by the Krom-Horn program
\begin{align*} 1_A:=\{a\leftarrow a\mid a\in A\}.
\end{align*} We will often omit the reference to the underlying alphabet $A$.
\end{definition}

We are now ready to prove the main structural result of the paper:\footnote{In the rest of the paper, all statements about spaces of programs are always with respect to some fixed underlying finite alphabet $A$.}

\begin{theorem}\label{t:ASP} The space of all answer set programs over some fixed alphabet forms a finite unital magma with respect to composition ordered by set inclusion with the neutral element given by the unit program. Moreover, the empty program is a left zero and composition distributes from the right over union, that is, for any answer set programs $P,Q,R$ we have
\begin{align} 
    \label{equ:P_cup_R_circ_R} (P\cup R)\circ Q&=(P\circ Q)\cup (R\circ Q).
\end{align}
\end{theorem}
\begin{proof} The composition of two programs is again a program, which is not completely obvious since $\lor$-programs possibly containing $\mathbf t$ and $\mathbf f$ occur in intermediate steps in the computation of composition. The reason is that the $not$-operator, which is defined rule-wise, translates every $\lor$-rule into a collection of ordinary rules not containing truth values in rule bodies. Hence, the space of all programs is closed under composition.

We proceed by showing that 1 is neutral with respect to composition. We first compute
\begin{align*} 
    not\ 1 = \{a\leftarrow not\ a\mid a\in A\}.
\end{align*} Next, by definition of composition, we have
\begin{align*} 
    P\circ 1 = \left\{head(r)\leftarrow body(S,N)\;\middle|\; 
    \begin{array}{l}
        S\subseteq_{sz(r_+)} 1\\
        N\subseteq_{sz(r_-)} not\ 1\\
        head(S)=body_+(r)\\
        head(N)=body_-(r)
    \end{array}\right\}.
\end{align*} Due to the simple structure of $1$ and $not\ 1$, $S\subseteq 1$ and $N\subseteq not\ 1$ imply
\begin{align*} 
    head(S)=body(S) \quad\text{and}\quad head(N)=body(horn(N)).
\end{align*} Together with 
\begin{align*} 
    head(S)=body_+(r) \quad\text{and}\quad head(N)=body_-(r)
\end{align*} we further deduce
\begin{align*} 
    body(S)=body_+(r) \quad\text{and}\quad body(horn(N))=body_-(r)
\end{align*} which finally implies
\begin{align*} 
    body(S,N)=body(r),\quad\text{for each rule $r$ in $P$}.
\end{align*} As composition is defined rule-wise by \prettyref{equ:bigcup}, this shows
\begin{align*} 
    P\circ 1=P.
\end{align*} The identity $1\circ P=P$ follows from the fact that since $1$ is Krom and Horn, we can omit every reference to $N$ in the definition of composition and $S$ amounts to a single rule $s\in P$:
\begin{align*} 
    1\circ P = \{head(r)\leftarrow body(s)\mid s\in P,\,head(s)=body(r)\}=P.
\end{align*} Hence, we have established that composition gives rise to a unital magma with neutral element 1. That the magma is ordered by set inclusion is obvious. We now turn our attention to the operation of union. In \prettyref{equ:bigcup}, we argued for the right-distributivity \prettyref{equ:P_cup_R_circ_R} of composition. That the empty set is a left zero is obvious.
\end{proof}

We will call magmas and arising from compositions of answer set programs as above \textit{\textbf{ASP magmas}}.

\subsection{Cup}\label{§:Cup}

Here we introduce the cup as an associative commutative binary operation on programs with identity (\prettyref{t:cup}), which will allow us to decompose the bodies of rules and programs into its positive and negative parts (cf. \prettyref{equ:pos_neg} and \prettyref{equ:posr_sqcup_negr}). 

\begin{definition} We define the \textit{\textbf{cup}} of $P$ and $R$ by
\begin{align*} 
    P\sqcup R:=\{head(r)\leftarrow body(r,s)\mid r\in P,s\in R, head(r)=head(s)\}.
\end{align*}
\end{definition}

For instance, we have
\begin{align*} 
    \left\{
    \begin{array}{l}
        a\leftarrow b\\
        a\leftarrow c
    \end{array}
    \right\}\sqcup \left\{
    \begin{array}{l}
        a\leftarrow b\\
        a\leftarrow c
    \end{array}
    \right\}=\left\{
    \begin{array}{l}
        a\leftarrow b\\
        a\leftarrow c\\
        a\leftarrow b,c
    \end{array}
    \right\}
\end{align*} which shows that cup is \textit{not} idempotent.

We can now decompose a rule $r$ of the form \prettyref{equ:r} in different ways, for example, as\footnote{We can omit parentheses as cup is associative according to the forthcoming \prettyref{t:cup}.}
\begin{align}\label{equ:r_sqcup} 
    \{r\}=\{a_0\leftarrow a_1\}\sqcup\ldots\sqcup\{a_0\leftarrow a_\ell\}\sqcup\{a_0\leftarrow not\ a_{\ell+1}\}\sqcup\ldots\sqcup\{a_0\leftarrow not\ a_k\}
\end{align} and as
\begin{align}\label{equ:pos_neg} 
    \{r\}=\{r_+\}\sqcup\{r_-\}.
\end{align}

As cup is defined rule-wise, we have
\begin{align}\label{equ:r_sqcup_s} 
    P\sqcup R=\bigcup_{\substack{r\in P\\s\in R}}(\{r\}\sqcup\{s\}).
\end{align}


\begin{notation} We make the notational convention that composition binds stronger than cup.
\end{notation}

\begin{fact} The space $(\mathbb P,\sqcup,A)$ of all answer set programs forms a finite commutative monoid with respect to cup with the neutral element given by the alphabet $A$, that is,
\begin{align} 
    \label{equ:cup_1} P\sqcup (Q\sqcup R) &= (P\sqcup Q)\sqcup R\\
    P\sqcup R &= R\sqcup P\\
    \label{equ:cup_3} P\sqcup A &= A\sqcup P=P.
\end{align}
\end{fact}

The next result shows that cup and union are compatible:

\begin{theorem}\label{t:cup} The space $(\mathbb P,\cup,\sqcup,\emptyset,A)$ forms a finite idempotent semiring with respect to union and cup with the zero given by the empty program. That is, we have the following identities in addition to \prettyref{equ:cup_1} -- \prettyref{equ:cup_3}:
\begin{align} 
    \emptyset\sqcup P &= P\sqcup\emptyset=\emptyset\\
    \label{equ:P_cup_R_sqcup_Q} (P\cup R)\sqcup Q &= (P\sqcup Q)\cup (R\sqcup Q)\\
    \label{equ:Q_sqcup_P_cup_R} Q\sqcup (P\cup R) &= (Q\sqcup P)\cup (Q\sqcup R).
\end{align} 
Finally, given two rules $r$ and $s$, in case $body(r)\cap body(s)=\emptyset$, we have
\begin{align}\label{equ:r_sqcup_s_Q} 
    (\{r\}\sqcup\{s\})Q&=\{r\}Q\sqcup\{s\}Q.
\end{align}
\end{theorem}
\begin{proof} The first identity holds trivially. The identities \prettyref{equ:P_cup_R_sqcup_Q} and \prettyref{equ:Q_sqcup_P_cup_R} follow from \prettyref{equ:r_sqcup_s}.

We proceed with proving \prettyref{equ:r_sqcup_s_Q} as follows. We distinguish two cases: (i) If $head(r)\neq head(s)$ then
\begin{align*} 
    (\{r\}\sqcup\{s\})Q=\emptyset Q=\emptyset
\end{align*} and since
\begin{align*} 
    head(\{r\}Q)\subseteq head(\{r\})=\{head(r)\} 
\end{align*} and
\begin{align*} 
    head(\{r\}Q)\subseteq head(\{s\})=\{head(s)\},
\end{align*} we have
\begin{align*} 
    head(\{r\}Q)\neq head(\{s\}Q)
\end{align*} which implies
\begin{align*} 
    \{r\}Q\sqcup\{s\}Q=\emptyset.
\end{align*} 

(ii) If $head(r)=head(s)$ then we have
\begin{align*} 
    (\{r\}&\sqcup\{s\})Q=\\
    &\left\{head(r)\leftarrow body(S,N) \;\middle|\; 
    \begin{array}{l}
        S\subseteq_{sz(\{r_+,s_+\})}Q\\
        N\subseteq_{sz(\{r_-,s_-\})}not\ Q\\
        head(S)=body_+(r)\cup body_+(s)\\
        head(N)=body_-(r)\cup body_-(s)
    \end{array}\right\}.
\end{align*} Now $body(r)\cap body(s)=\emptyset$ implies
\begin{align*} 
    sz(\{r_+,s_+\})=sz(r_+)+sz(s_+)
\end{align*} and
\begin{align*} 
    sz(\{r_-,s_-\})=sz(r_-)+sz(s_-).
\end{align*} Hence, the above expression is equivalent to
\begin{align*} 
    &\left\{head(r)\leftarrow body(S_r,S_s,N_r,N_s) \;\middle|\; 
    \begin{array}{l}
        S_r\subseteq_{sz(r_+)}Q\\
        S_s\subseteq_{sz(s_+)}Q\\
        N_r\subseteq_{sz(r_-)}not\ Q\\
        N_s\subseteq_{sz(s_-)}not\ Q\\
        head(S_r)=body_+(r)\\
        head(S_s)=body_+(s)\\
        head(N_r)=body_-(r)\\
        head(N_s)=body_-(s)
    \end{array}\right\}\\
    &=\left\{head(r)\leftarrow body(S,N) \;\middle|\; 
    \begin{array}{l}
        S\subseteq_{sz(r_+)}Q\\
        N\subseteq_{sz(r_-)}not\ Q\\
        head(S)=body_+(r)\\
        head(N)=body_-(r)
    \end{array}\right\}\sqcup\\
    &\qquad\qquad\left\{head(r)\leftarrow body(S,N) \;\middle|\; 
    \begin{array}{l}
        S\subseteq_{sz(s_+)}Q\\
        N\subseteq_{sz(s_-)}not\ Q\\
        head(S)=body_+(s)\\
        head(N)=body_-(s)
    \end{array}\right\}\\
    &=\{r\}Q\sqcup\{s\}Q.
\end{align*}
\end{proof}

The following counter-example shows why we require $body(r)\cap body(s)=\emptyset$ in \prettyref{equ:r_sqcup_s_Q}:
\begin{align*} 
    (\{a\leftarrow b\}\sqcup\{a\leftarrow b,c\})\circ\left\{
    \begin{array}{l}
        b\leftarrow d\\
        b\leftarrow e\\
        c\leftarrow f
    \end{array}
    \right\}= \left\{
    \begin{array}{l}
        a\leftarrow d,f\\
        a\leftarrow e,f
    \end{array}
    \right\}
\end{align*} whereas
\begin{align*} 
    \left(\{a\leftarrow b\}\circ\left\{
    \begin{array}{l}
        b\leftarrow d\\
        b\leftarrow e\\
        c\leftarrow f
    \end{array}
    \right\}\right)\sqcup\left(\{a\leftarrow b,c\}\circ\left\{
    \begin{array}{l}
        b\leftarrow d\\
        b\leftarrow e\\
        c\leftarrow f
    \end{array}
    \right\}\right)=\left\{
    \begin{array}{l}
        a\leftarrow d,f\\
        a\leftarrow e,f\\
        a\leftarrow d,e,f
    \end{array}
    \right\}.
\end{align*}


As rules can be decomposed into a positive and a negative part according to \prettyref{equ:pos_neg}, composition splits into a positive and a negative part as well.

\begin{corollary} For any answer set programs $P$ and $R$, we have
\begin{align}\label{equ:posr_sqcup_negr} PR=\bigcup_{r\in P}[\{r_+\}R\sqcup\{horn(r_-)\}not\ R].
\end{align}
\end{corollary}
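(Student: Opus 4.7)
The plan is to reduce the claimed identity to the single-rule case using the already-established right-distributivity \prettyref{equ:bigcup}, and then verify the identity by unpacking the definitions of composition and cup directly. Since
$$P\circ R = \bigcup_{r\in P}(\{r\}\circ R)\quad\text{and}\quad \bigcup_{r\in P}[\{pos(r)\}R\sqcup\{horn(neg(r))\}(not\ R)]$$
are both unions indexed by $r\in P$, it suffices to show, for a single rule $r$, that $\{r\}\circ R = \{pos(r)\}R \sqcup \{horn(neg(r))\}(not\ R)$.

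Next I would simplify the two factors on the right. Since $pos(r)$ is Horn, we have $sz(neg(pos(r)))=0$, so the $N$-component in \prettyref{def:circ} is forced to be empty, giving
$$\{pos(r)\}R = \{head(r)\leftarrow body(S) \mid S\subseteq_{sz(pos(r))} R,\; head(S)=body(pos(r))\}.$$
Analogously, $horn(neg(r))$ is Horn with $sz(horn(neg(r)))=sz(neg(r))$, so composing with $not\ R$ yields
$$\{horn(neg(r))\}(not\ R) = \{head(r)\leftarrow body(N) \mid N\subseteq_{sz(neg(r))} not\ R,\; head(N)=body(horn(neg(r)))\}.$$

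Now I would apply the definition of cup. Every rule in each of these two programs has head equal to $head(r)$, so the head-matching condition in the cup is automatic, and the cup pairs each admissible $(S,N)$ and concatenates bodies. Since $body(S)\cup body(N)=body(S,N)$, the resulting set is
$$\left\{head(r)\leftarrow body(S,N)\;\middle|\;\begin{array}{l} S\subseteq_{sz(pos(r))}R\\ N\subseteq_{sz(neg(r))}not\ R\\ head(S)=body(pos(r))\\ head(N)=body(horn(neg(r)))\end{array}\right\},$$
which is precisely $\{r\}\circ R$ by \prettyref{def:circ}.

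The main obstacle is purely bookkeeping around degenerate cases: when $pos(r)$ or $neg(r)$ is empty (i.e., $r$ is purely negative, purely Horn, or a fact), the corresponding factor collapses to the singleton fact $\{head(r)\leftarrow\}$, and one must check that the cup with this fact behaves as an identity on the other factor so that the equality still holds (and that both sides collapse to $\emptyset$ when no matching subset $S$ or $N$ exists). These checks are immediate from the cup definition and the conventions for facts, so once the principal case is handled the identity follows.
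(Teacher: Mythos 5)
Your proof is correct, but it takes a more direct route than the paper. Both arguments begin by reducing to a single rule via \prettyref{equ:bigcup}. From there the paper proceeds by chaining three previously established identities: the decomposition $\{r\}=\{pos(r)\}\sqcup\{neg(r)\}$ of \prettyref{equ:pos_neg}, the distributivity $(\{r\}\sqcup\{s\})Q=\{r\}Q\sqcup\{s\}Q$ of \prettyref{equ:r_sqcup_s_Q} (applicable because $body(pos(r))$ and $body(neg(r))$ are automatically disjoint, one consisting of atoms and the other of negated atoms), and the reduction $N\circ R=horn(N)\circ not\ R$ for negative programs from \prettyref{equ:NR}. You instead unfold \prettyref{def:circ} and the definition of cup directly, which in effect re-proves inline exactly the special cases of \prettyref{equ:r_sqcup_s_Q} and \prettyref{equ:NR} that are needed here; your simplification of the two factors is precisely the Horn-left-factor formula \prettyref{equ:HR} applied to $R$ and to $not\ R$, and your treatment of the degenerate cases (an empty positive or negative part collapsing to the fact $\{head(r)\}$, which is neutral for cup among rules sharing that head, and both sides collapsing to $\emptyset$ when no matching $S$ or $N$ exists) is sound. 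The paper's version buys modularity and brevity at the cost of having to verify the disjointness hypothesis of \prettyref{equ:r_sqcup_s_Q}; yours is self-contained and makes the bookkeeping explicit, though it duplicates work already done in \prettyref{thm:cup} and \prettyref{prop:NR}.
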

\begin{proof} We compute\footnote{The forward reference in the last line is non-circular and thus harmless.}
\begin{align*} 
    PR
        &\stackrel{\prettyref{equ:bigcup}}= \bigcup_{r\in P}(\{r\}R)\\
        &\stackrel{\prettyref{equ:pos_neg}}= \bigcup_{r\in P}[(\{r_+\}\sqcup\{r_-\})R]\\
        &\stackrel{\prettyref{equ:r_sqcup_s_Q}}= \bigcup_{r\in P}[\{r_+\}R\sqcup\{r_-\}R]\\
        &\stackrel{\prettyref{equ:NR}}= \bigcup_{r\in P}[\{r_+\}R\sqcup\{horn(r_-)\}not\ R].
\end{align*}
\end{proof}

\section{Restricted classes of programs}\label{§:Restricted}

In this section, we shall study the basic properties of composition in the important classes of Horn, negative, and Krom programs, showing that the notion of composition simplifies in each of these classes.

\subsection{Krom-Horn programs}\label{§:Krom}

Recall that we call a Horn program \textit{Krom} if it contains only rules with at most one body atom. This includes interpretations, unit programs, and permutations. 


\begin{proposition} For any Krom-Horn program $K$ and answer set program $R$, composition simplifies to
\begin{align*} 
    KR=facts(K)\cup\{a\leftarrow B\mid a\leftarrow b\in K,b\leftarrow B\in R\}.
\end{align*}
\end{proposition}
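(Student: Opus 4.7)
The plan is to unpack the general composition formula from \prettyref{def:circ} in the special case where $K$ is Krom-Horn, and show that it collapses to the stated simple form. The argument has essentially no hidden difficulty, so I will be explicit only about which earlier identities are being invoked.

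First I would split $K$ using \prettyref{equ:P} as $K = facts(K) \cup proper(K)$ and apply the right-distributivity identity \prettyref{equ:P_cup_R_circ_R} of \prettyref{thm:ASP} to obtain
\begin{align*}
K \circ R = (facts(K) \circ R) \cup (proper(K) \circ R).
\end{align*}
The first summand reduces to $facts(K)$ by the identity $facts(P) \circ R = facts(P)$ used in the derivation \prettyref{equ:facts_proper} (intuitively, a fact has empty positive and negative parts, so the only admissible choice in \prettyref{def:circ} is $S = N = \emptyset$, yielding back the fact itself). This handles the first piece of the claimed equality.

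Next I would reduce the second summand rule by rule using \prettyref{equ:bigcup}: $proper(K) \circ R = \bigcup_{r \in proper(K)} \{r\} \circ R$. A proper rule of a Krom-Horn program has the shape $r = a \leftarrow b$, so $pos(r) = r$, $neg(r)$ is empty, $sz(pos(r)) = 1$, and $sz(neg(r)) = 0$. Feeding these into \prettyref{def:circ}, the cardinality constraint $N \subseteq_0 not\,R$ forces $N = \emptyset$ (so the $not\,R$ machinery plays no role whatsoever), and $S \subseteq_1 R$ with $head(S) = body(pos(r)) = \{b\}$ forces $S = \{s\}$ for a single rule $s \in R$ with $head(s) = b$. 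Writing such an $s$ as $b \leftarrow B$, we get $body(S,N) = body(s) = B$, and therefore
\begin{align*}
\{r\} \circ R = \{a \leftarrow B \mid b \leftarrow B \in R\}.
\end{align*}

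Taking the union of these over all proper rules $a \leftarrow b \in K$ gives exactly $\{a \leftarrow B \mid a \leftarrow b \in K,\ b \leftarrow B \in R\}$ (the condition $a \leftarrow b \in K$ vacuously excludes facts since facts have no body atom $b$), and combining with the first summand yields the claim. The only step that could trip up a reader is the vanishing of the $not\,R$ contribution, so I would flag explicitly that the hornness of $K$ forces $sz(neg(r)) = 0$ and hence $N = \emptyset$, which is why $not\,R$ does not appear on the right-hand side.
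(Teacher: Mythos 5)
Your proof is correct; note that the paper states this proposition without any proof, so there is nothing to diverge from, and your argument (split off $facts(K)$ via right-distributivity and the left-zero property of interpretations, then unpack \prettyref{def:circ} for a proper Krom rule $a\leftarrow b$, where $sz(neg(r))=0$ forces $N=\emptyset$ and $S\subseteq_1 R$ with $head(S)=\{b\}$ forces $S=\{s\}$ for a single $s=b\leftarrow B\in R$) is exactly the direct verification the paper leaves implicit. The one step worth flagging, as you do, is that Horn-ness of $K$ eliminates the $not\ R$ machinery entirely, which is why the right-hand side is free of negation.
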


We have the following structural result as a specialization of \prettyref{t:ASP}:

\begin{theorem}\label{t:Krom} The space of all Krom-Horn programs forms a monoid with the neutral element given by the unit program. Moreover, Krom-Horn programs distribute from the left, that is, for any answer set programs $P$ and $R$, we have
\begin{align}\label{equ:K_P_cup_R} 
    K(P\cup R)=(KP)\cup (KR).
\end{align} This implies that the space of proper\footnote{If a Krom-Horn program $K$ contains facts then $K\circ\emptyset=facts(K)\neq\emptyset$ violates the axiom $a\cdot 0=0$ of a semiring.} Krom-Horn programs forms a finite idempotent semiring.
Moreover, For any Krom-Horn program $K$ and answer set programs $P$ and $R$, we have
\begin{align}\label{equ:KPR} 
    K(PR)=(KP)R.
\end{align}
\end{theorem}
\begin{proof} For the proof of the first three statements see \citeA[Theorem 12]{Antic21-1}.\footnote{To be more precise, \citeA[Theorem 12]{Antic21-1} shows $K(P\cup R)=KP\cup KR$ in case $P$ and $R$ are \textit{Horn}; as the bodies of $P$ and $R$ are irrelevant, the proof can be directly transferred to our setting.}

To prove \prettyref{equ:KPR}, by \prettyref{equ:bigcup} and \prettyref{equ:P_cup_R_Q} it suffices to prove
\begin{align}\label{equ:r_PR} \{r\}(PR)=(\{r\}P)R\qquad\text{for any Krom-Horn rule $r$.}
\end{align} We distinguish two cases: (i) in case $r$ is a fact $a\in A$, we have $\{r\}(PR)=\{r\}=(\{r\}P)R$; (ii) in case $r$ is a Krom-Horn rule $a\leftarrow b$, we have $$a\leftarrow a_1,\ldots,a_\ell,not\ a_{\ell+1},\ldots not\ a_k\in \{a\leftarrow b\}(PR),\quad k\geq 0,$$ iff there is a rule
\begin{align*} 
    b\leftarrow a_1,\ldots,a_\ell,not\ a_{\ell+1},\ldots not\ a_k\in PR
\end{align*} iff there is a rule
\begin{align*} 
    b\leftarrow b_1,\ldots,b_m,not\ b_{m+1},\ldots,not\ b_n\in P
\end{align*} and there are subprograms
\begin{align*} 
    \left\{
    \begin{array}{l}
        b_1\leftarrow B_1\\
        \vdots\\
        b_m\leftarrow B_m
    \end{array}
    \right\}\subseteq R \quad\text{and}\quad \left\{
    \begin{array}{l}
        b_{m+1}\leftarrow B_{m+1}\\
        \vdots\\
        b_n\leftarrow B_n
    \end{array}
    \right\}\subseteq not\ R
\end{align*} such that
\begin{align*} 
    B_1\cup\ldots\cup B_n=\{a_1,\ldots,a_\ell,not\ a_{\ell+1},\ldots not\ a_k\}
\end{align*} iff there is a rule
\begin{align*} 
    a\leftarrow b_1,\ldots,b_m,not\ b_{m+1},\ldots,not\ b_n\in\{a\leftarrow b\}P
\end{align*} and
\begin{align*} 
    a\leftarrow (B_1\cup\ldots\cup B_n)=a\leftarrow a_1,\ldots,a_\ell,not\ a_{\ell+1},\ldots not\ a_k\in (\{a\leftarrow b\}P)R.
\end{align*} Hence, we have shown \prettyref{equ:r_PR}, from which we deduce for any Krom-Horn program $K$:
\begin{align*} 
    K(PR)\stackrel{\prettyref{equ:bigcup}}=\bigcup_{r\in K}[\{r\}(PR)]\stackrel{\prettyref{equ:r_PR}}=\bigcup_{r\in K}[(\{r\}P)R]\stackrel{\prettyref{equ:P_cup_R_Q}}=\left[\bigcup_{r\in K}(\{r\}P)\right]R\stackrel{\prettyref{equ:bigcup}}=(KP)R.
\end{align*}
\end{proof}

\subsubsection{Interpretations}

Formally, interpretations are Krom programs containing only rules with empty bodies (i.e. facts), which gives interpretations a special compositional meaning:

\begin{fact}\label{f:IP=I} Every interpretation is a left zero with respect to composition which means that for any answer set program $P$, we have
\begin{align}\label{equ:IP=I} 
    IP=I.
\end{align} Consequently, the space of interpretations forms a right ideal.\footnote{In \prettyref{c:I_ideal} we will see that it forms an ideal.}
\end{fact}

\subsubsection{Permutations}\label{§:pi}

With every permutation $\pi:A\to A$ we associate a Krom-Horn program
\begin{align*} 
    \pi=\{\pi(a)\leftarrow a\mid a\in A\}.
\end{align*} We adopt here the standard cycle notation for permutations. For instance, we have
\begin{align*} 
    \pi_{(a\,b)}:=\left\{
    \begin{array}{l}
        a\leftarrow b\\
        b\leftarrow a
    \end{array}
    \right\}\quad\text{and}\quad \pi_{(a\,b)(c)}:=\left\{
    \begin{array}{l}
        a\leftarrow b\\
        b\leftarrow a\\
        c\leftarrow c
    \end{array}
    \right\}.
\end{align*} Notice that the inverse $\pi^{-1}$ of a permutation $\pi$ translates into the dual of a program. Interestingly, we can rename the atoms occurring in a program via permutations and composition by
\begin{align*} 
    (\pi P)\pi^d=\{\pi(head(r))\leftarrow\pi(body(r))\mid r\in P\}. 
\end{align*}

We have the following structural result as a direct instance of a more general result for permutations:

\begin{proposition} The space of all permutation programs forms a subgroup of the space of all answer set programs.
\end{proposition}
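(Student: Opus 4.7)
The plan is to exhibit an injective map from the symmetric group $S_A$ on $A$ into the monoid of answer set programs under composition, whose image is precisely the set of permutation programs and which respects composition; the subgroup property then follows automatically from the group structure of $S_A$, together with associativity in the Krom-Horn fragment (\prettyref{equ:KPR}) and the fact that $1_A$ is the neutral element (\prettyref{thm:ASP}).

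First I would verify closure by a direct computation with the Krom-Horn composition formula from \prettyref{sec:Krom}. Given permutations $\pi,\sigma\in S_A$, for each $a\in A$ the unique rule in the program $\pi$ whose body atom is $a$ is $\pi(a)\leftarrow a$, and the unique rule in $\sigma$ whose head equals $a$ is $a\leftarrow\sigma^{-1}(a)$ (using bijectivity of $\sigma$). The Krom-Horn formula then produces exactly the rule $\pi(a)\leftarrow\sigma^{-1}(a)$. Reindexing by $c:=\sigma^{-1}(a)$ rewrites this as $(\pi\circ\sigma)(c)\leftarrow c$, so the program composition $\pi\circ\sigma$ equals the permutation program associated with the functional composite $\pi\circ\sigma$. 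In particular, permutation programs are closed under composition, and $\pi\mapsto\pi$ is a homomorphism.

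Next I would identify the identity and inverses. The identity permutation yields $\{a\leftarrow a\mid a\in A\}=1_A$, which is the neutral element of composition by \prettyref{thm:ASP}. For any $\pi\in S_A$, every rule of the program $\pi$ is proper, so unpacking the definition of dual gives $\pi^d=\{a\leftarrow\pi(a)\mid a\in A\}$; reindexing by $b:=\pi(a)$ this equals $\{\pi^{-1}(b)\leftarrow b\mid b\in A\}$, i.e., the permutation program of $\pi^{-1}$. Applying the closure computation twice then yields $\pi\circ\pi^d=\pi^d\circ\pi=1_A$, so every permutation program admits a two-sided inverse that is again a permutation program. Combined with associativity in the Krom-Horn case (\prettyref{equ:KPR}), this establishes the subgroup claim.

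The main obstacle to watch is the bookkeeping of directions: the Krom-Horn matching step requires pairing the body atom of a rule in $\pi$ with the head of a rule in $\sigma$, which inverts $\sigma$ in the reindexing and makes it very easy to conflate the program composition $\pi\circ\sigma$ with $\sigma\circ\pi$. Once this reindexing is handled consistently, no further calculation is needed --- the group axioms transfer verbatim from $S_A$ via the homomorphism.
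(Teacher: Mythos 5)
Your proof is correct. The paper itself offers no argument for this proposition (it is merely asserted as ``a direct instance of a more general result for permutations''), so there is no authorial proof to compare against; your direct verification --- computing $\pi\circ\sigma$ via the Krom--Horn composition formula, checking that the identity permutation yields $1_A$, identifying $\pi^d$ with the program of $\pi^{-1}$, and invoking \prettyref{equ:KPR} for associativity on the Krom--Horn fragment --- is exactly the calculation needed, and you handle the direction of the reindexing (the one place where a sign error would creep in) correctly. The only nit is terminological: the ambient space of all answer set programs is a unital \emph{magma}, not a monoid, by \prettyref{thm:ASP}, so the word ``subgroup'' should be read as ``the restriction of $\circ$ to permutation programs is a group operation,'' which is precisely what your argument establishes.
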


\subsection{Horn programs}\label{§:Horn}

Recall that a program is called \textit{Horn} if it contains only positive rules not containing negation as failure. The composition of Horn programs has been studied by \citeA{Antic21-1} and we shall recall here some basic results.

As the syntactic structure of Horn programs consisting of rules without negation as failure in rule bodies is much simpler than the structure of general answer set programs, one can expect the composition to simplify for Horn programs. In fact, the next result shows that even in the case where only the left program in the composition is Horn, the definition of composition simplifies dramatically.

\begin{proposition}\label{p:HR} For any Horn program $H$ and answer set program $R$, composition simplifies to
\begin{align}\label{equ:HR} 
    HR=\{head(r)\leftarrow body(S)\mid r\in P,S\subseteq_{sz(r)} R, head(S)=body(r)\}.
\end{align}
\end{proposition}
\begin{proof} Since $H$ is Horn, we can omit every expression containing $N$ in \prettyref{d:circ}.
\end{proof}

We have the following structural result:

\begin{theorem}\label{t:Horn} The space of all Horn programs forms a unital submagma of the space of all answer set programs with the neutral element given by the unit program. 
\end{theorem}
\begin{proof} See \citeA[Theorem 9]{Antic21-1}. 
\end{proof}

\subsection{Negative programs}\label{§:Negative}

Recall that negative programs contain only rules with negated body atoms including facts and interpretations.\footnote{This is reasonable as we can interpret every fact $a$ as the negated ``rule'' $a\leftarrow not\ \mathbf f$ (see \prettyref{§:C}).} As negative programs are \textit{not} closed under composition, the space of negative programs does \textit{not} form a submagma of the space of all answer set programs, which is in contrast to the situation for Horn programs (cf. \prettyref{t:Horn}). This is witnessed, for example, by the identity\footnote{$a$ and $not\ not\ a$ are not equivalent under answer set semantics and the logic of here-and-there, which is irrelevant here as we are interested only in the compositional structure of $not$, that is,
\begin{align*} 
    \{a \leftarrow not\ b\}\circ \{b \leftarrow not\ c\} = \{a \leftarrow not\ not\ c\} = \{a \leftarrow c\}
\end{align*} since
\begin{align*} 
    \{a \leftarrow not\ not\ c\} \circ \{c\} = \{a\} = \{a \leftarrow c\}\circ \{c\}.
\end{align*}}
\begin{align*} 
    (not\ 1)(not\ 1)=1.
\end{align*} However, computing the composition with respect to negative programs still simplifies compared to the general case as we can reduce the composition with a negative program to the Horn case as follows.

\begin{proposition}\label{p:NR} For any negative program $N$ and answer set program $R$, composition simplifies to
\begin{align}\label{equ:NR} NR=horn(N)(not\ R).
\end{align}
\end{proposition}
\begin{proof} We compute
\begin{align*} 
    NR
        &=\left\{head(r)\leftarrow body(M) \;\middle|\; 
        \begin{array}{l}
            r\in N\\
            M\subseteq_{sz(r)} not\ R\\
            head(M)=body(horn(r))
        \end{array}\right\}\\
       &=\left\{head(r)\leftarrow body(M) \;\middle|\;
            \begin{array}{l}
                  r\in horn(N)\\
                  M\subseteq_{sz(r)} not\ R\\
                  head(M)=body(r)
            \end{array}\right\}\\
    &\stackrel{\prettyref{equ:HR}}= horn(N)(not\ R).
\end{align*}
\end{proof}


Interestingly enough, composition is compatible with negation as failure in the following sense.

\begin{proposition} For any answer set programs $P$ and $R$, we have
\begin{align}\label{equ:not_P} not\ P=(not\ 1)P \quad\text{and}\quad not(PR)=(not\ P)R.
\end{align}
\end{proposition}
\begin{proof} As a direct consequence of \prettyref{p:NR} and since $not\ 1$ is negative, we have
\begin{align*} 
    (not\ 1)P \stackrel{\prettyref{equ:NR}}= horn(not\ 1)(not\ P) = 1(not\ P) = not\ P,
\end{align*} which further implies
\begin{align*} 
    not(PR)=(not\ 1)(PR)\stackrel{\prettyref{equ:KPR}}=((not\ 1)P)R=(not\ P)R.
\end{align*}
\end{proof}



\section{Algebraic transformations}\label{§:Transformations}

In this section, we study algebraic transformations of programs expressible via composition and other operations.

Our first observation is that we can compute the heads and bodies of a Horn program $H$ by
\begin{align*} 
    head(H)=HA \quad\text{and}\quad body(H) = head(proper(H)^d) = proper(H)^dA.
\end{align*} This implies that we can compute the heads of an answer set program $P$ by
\begin{align*} 
    head(P) = P_+A\cup horn(P_-)A.
\end{align*} Consequently, we can compute the heads of a negative program $N$ by
\begin{align*} 
    head(N) = N\emptyset.
\end{align*}

\subsection{Reducts}

Reducing the rules of a program to a restricted alphabet is a fundamental operation on programs and in this section we will show how reducts can be computed via composition, cup, and union (cf. \prettyref{t:reducts}).

\subsubsection{Horn programs}

We first recall some results of \citeA{Antic21-1} on computing the reducts of Horn programs.

\begin{proposition} The left and right reducts of a Horn program $H$ with respect to some interpretation $I$ can be expressed as
\begin{align}\label{equ:^IH} 
    ^IH = (1^I)H \quad\text{and}\quad H^I = H(1^I).
\end{align} Consequently, we obtain the reduction of $H$ to the atoms in $I$ via
\begin{align}\label{equ:H_I} 
    {^I}(H^I) = (^IH)^I = (1^I)H(1^I).
\end{align} 
\end{proposition}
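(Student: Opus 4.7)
The plan is to verify each identity by unfolding the definition of composition for Horn programs given in \prettyref{equ:HR}, using the very simple structure of $1^I$.

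First, observe that
\begin{align*}
1^I = \{a\leftarrow a \mid a\in 1,\; I\models\{a\}\} = \{a\leftarrow a\mid a\in I\},
\end{align*}
so $1^I$ is a minimalistic Krom-Horn program with exactly one rule $a\leftarrow a$ per atom $a\in I$.

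For $^IH = 1^I\circ H$: since $1^I$ is Krom-Horn, I would apply the Krom-Horn composition formula from \prettyref{sec:Krom}, yielding
\begin{align*}
1^I\circ H = facts(1^I)\cup\{a\leftarrow B\mid a\leftarrow b\in 1^I,\; b\leftarrow B\in H\}.
\end{align*}
Because $1^I$ has no facts and its rules are exactly $a\leftarrow a$ for $a\in I$, the right-hand side collects precisely those rules $a\leftarrow B\in H$ with $a\in I$, i.e.\ the rules of $H$ whose head is satisfied by $I$. This is $^IH$ by definition.

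For $H^I = H\circ 1^I$: I would apply \prettyref{prop:HR}, giving
\begin{align*}
H\circ 1^I = \{head(r)\leftarrow body(S)\mid r\in H,\; S\subseteq_{sz(r)} 1^I,\; head(S)=body(r)\}.
\end{align*}
Since $1^I$ is minimalistic with head atoms exactly $I$, the condition $head(S)=body(r)$ has a witness $S\subseteq 1^I$ if and only if $body(r)\subseteq I$; and in that case $S=\{a\leftarrow a\mid a\in body(r)\}$ is forced, so $body(S)=body(r)$ and the resulting rule is $r$ itself. Thus $H\circ 1^I = \{r\in H\mid body(r)\subseteq I\} = H^I$.

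Finally, \prettyref{equ:H_I} is obtained by chaining: $H|_I$ is the set of rules of $H$ whose head and body atoms all lie in $I$, and both $^I(H^I)$ and $({}^IH)^I$ apply the two filtering conditions (head in $I$, body in $I$) in opposite orders, which commute since left and right reduct act on disjoint parts of the rule. The only subtle point, which I would just flag briefly, is that in the step for $H^I$ the uniqueness of $S$ relies on $1^I$ being minimalistic; were there several rules in $1^I$ with the same head, multiple choices of $S$ could arise. The main (minor) obstacle is thus verifying that this forcing argument really pins down $S$ and that $sz(r)=|body(r)|$ matches $|S|$, which is immediate since all elements of $body(r)$ are distinct atoms.
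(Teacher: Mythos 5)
Your proposal is correct: unfolding $1^I=\{a\leftarrow a\mid a\in I\}$ and applying the Krom-Horn composition formula (for the left reduct) and \prettyref{prop:HR} (for the right reduct) is exactly the direct verification that the paper itself delegates to the citation of \citeA[Proposition 11]{Antic_i3}, and your chaining argument for $H|_I$ together with the observation that the two reducts act on disjoint parts of each rule is the intended justification of \prettyref{equ:H_I}. The one subtlety you flag---that $S$ is forced because $1^I$ is minimalistic and that $sz(r)$ agrees with $|body(r)|$ since bodies are sets---is handled correctly and is the only point where care is needed.
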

\begin{proof} See the proof of \citeA[Proposition 17]{Antic21-1}.
\end{proof}

We can compute the facts of a Horn program $H$ via
\begin{align}\label{equ:facts_H} 
    H^\emptyset \stackrel{\prettyref{equ:^IH}}= H(1^\emptyset) = H\emptyset = facts(H).
\end{align}

Moreover, for any interpretations $I$ and $J$, we have
\begin{align}\label{equ:^JI} ^JI=I\cap J \quad\text{and}\quad I^J=I.
\end{align}

\begin{proposition}\label{p:^IH} For any Horn programs $H$ and $G$, we have
\begin{align} 
\label{equ:gH^I} gH^I &= H\\
\label{equ:^IHG} ^I(H\cup G) &= {^I}H\cup{^I}G \quad\text{and}\quad {^I}(HG) = ({^I}H)G\\
\label{equ:^IH_cup_G} (H\cup G)^I &= H^I\cup G^I \quad\text{and}\quad (HG)^I = HG^I.
\end{align}
\end{proposition}
\begin{proof} The first identity holds trivially. For the identities in the last two lines, see the proof of \citeA[Proposition 17]{Antic21-1}.
\end{proof}

\subsubsection{Negative programs}

Reducts of negative programs are in a sense ``dual'' to reducts of Horn programs studied above. In the rest of this section, $N$ denotes a negative program. 

Our first observation is a dual of \prettyref{equ:^IH}.

\begin{proposition} The left and right reducts of a negative program $N$ with respect to an interpretation $I$ can be expressed as
\begin{align}\label{equ:N^I} 
    ^IN = (1^I)N \quad\text{and}\quad N^I = horn(N)1^{A-I}.
\end{align} Moreover, the Gelfond-Lifschitz reduct of $N$ with respect to $I$ can be expressed as
\begin{align}\label{equ:gN^I} 
    gN^I = NI.
\end{align}
\end{proposition}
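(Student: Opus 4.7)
The plan is to verify each of the three identities by unpacking the definition on the left-hand side and then applying the specialized composition formulas already established in the paper: the Krom-Horn formula of \prettyref{sec:Krom} for $1^I$, the Horn formula \prettyref{equ:HR} of \prettyref{prop:HR} for $horn(N)$, and the general definition of composition together with a direct calculation of $not\ I$.

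For the identity $^IN = 1^I\circ N$, I first note that $1^I=\{a\leftarrow a\mid a\in I\}$ is Krom-Horn. The Krom-Horn composition formula of \prettyref{sec:Krom} gives $1^I\circ N=\{a\leftarrow B\mid a\leftarrow a\in 1^I,\ a\leftarrow B\in N\}=\{a\leftarrow B\mid a\in I,\ a\leftarrow B\in N\}$, which is precisely the set of rules in $N$ whose head lies in $I$, i.e., $^IN$. Note that body literals of $N$, including negative ones, pass through unchanged because the matched Krom-Horn rule contributes no body atoms of its own.

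For the identity $N^I=horn(N)\circ 1^{A-I}$, I apply \prettyref{equ:HR} to $horn(N)\circ 1^{A-I}$, where $1^{A-I}=\{a\leftarrow a\mid a\in A-I\}$. For each negative rule $r=a\leftarrow not\ a_{\ell+1},\ldots,not\ a_k$ of $N$, with hornification $horn(r)=a\leftarrow a_{\ell+1},\ldots,a_k$, a subprogram $S\subseteq_{sz(horn(r))}1^{A-I}$ satisfying $head(S)=body(horn(r))$ exists iff every $a_j$ with $\ell+1\leq j\leq k$ lies in $A-I$; when it exists, $S$ is unique and reproduces the body. This condition is exactly $I\models body(r)$, since $I$ satisfies each $not\ a_j$ iff $a_j\notin I$. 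Hence the composition selects precisely the rules corresponding to $N^I$.

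For the identity $gN^I=N\circ I$, the critical preparatory step is to show that $not\ I=A-I$ as a set of facts. Treating $I$ as a program of facts, $\mathbf{tf}(I)=\{a\leftarrow\mathbf t\mid a\in I\}\cup\{a\leftarrow\mathbf f\mid a\in A-I\}$; since each head occurs exactly once, the $\lor$-operator leaves this program unchanged; applying $not$ rule-wise swaps $\mathbf t$ and $\mathbf f$ in each body; and the overline operator then deletes the rules still containing $\mathbf f$ and strips $\mathbf t$ from the others, leaving precisely the facts $\{a\mid a\in A-I\}$. With $not\ I=A-I$ in hand, I expand \prettyref{def:circ} at each negative rule $r=a\leftarrow not\ a_{\ell+1},\ldots,not\ a_k\in N$: the positive-subprogram condition is vacuous since $sz(pos(r))=0$, and a negative subprogram $N'\subseteq_{sz(neg(r))}not\ I$ with $head(N')=\{a_{\ell+1},\ldots,a_k\}$ exists iff these atoms all lie in $A-I$, i.e., iff $I\models body(neg(r))$; when it exists, $N'$ consists entirely of facts, so $body(S,N')=\emptyset$ and the composed rule is the fact $head(r)=pos(r)$. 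Collecting over $r\in N$ recovers $gN^I=\{pos(r)\mid r\in N:I\models body(neg(r))\}$.

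The main obstacle is the $not\ I$ computation: it chains four distinct operators ($\mathbf{tf}$, $\lor$, $not$, overline), each inserting or removing the truth-value symbols $\mathbf t$ and $\mathbf f$, and requires careful tracking to see that the end product collapses to a set of facts indexed by $A-I$. Once this is done, the first and second identities follow immediately from the specialized composition formulas, and the third identity is a direct match between the expansion of $N\circ I$ and the definition of the Gelfond-Lifschitz reduct.
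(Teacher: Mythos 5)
Your proof follows essentially the same route as the paper's: the first identity is the Krom--Horn computation that the paper delegates to the analogous Horn-program argument for $^IH=1^I\circ H$; the second unfolds $horn(N)\circ 1^{A-I}$ to exactly the same place the paper reaches via the chain $N^I=horn(N)^{A-I}=horn(N)\circ 1^{A-I}$; and the third is the same direct expansion of the composition using $not\ I=A-I$, which you, unlike the paper, actually derive from the $\mathbf{tf}$/$\lor$/$not$/overline chain --- a worthwhile explicit addition. One shared caveat: in the second identity both you and the paper silently identify the output of the composition, which consists of the hornified rules $horn(r)$ with positive bodies, with $N^I$ itself, which by definition consists of the original negative rules $r$; strictly speaking the computation yields $horn(N^I)$ rather than $N^I$ (e.g.\ for $N=\{a\leftarrow not\ b\}$ and $I=\emptyset$ one gets $\{a\leftarrow b\}\neq\{a\leftarrow not\ b\}=N^I$), so your closing phrase ``the rules corresponding to $N^I$'' glosses over exactly the same syntactic mismatch as the paper's first equality $N^I=horn(N)^{A-I}$.
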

\begin{proof} The proof of the first identity is analogous to the proof of \prettyref{equ:^IH}. For the second identity, we compute
\begin{align*} 
    N^I=horn(N)^{A-I} \stackrel{\prettyref{equ:^IH}}= horn(N)1^{A-I}.
\end{align*} For the last identity, we compute
\begin{align*} 
    NI
        &= \left\{head(r)\leftarrow body(M) \;\middle|\; 
            \begin{array}{l}
                r\in N\\ 
                M\subseteq_{sz(r_-)}not\ I\\
                head(M)=body_-(r)
            \end{array}\right\}\\
        &= \left\{head(r)\leftarrow body(M) \;\middle|\; 
            \begin{array}{l}
                r\in N\\ 
                M\subseteq_{sz(r)}A-I\\
                head(M)=body_-(r)
            \end{array}\right\}\\
        & = \{head(r)\mid r\in N, body_-(r)\subseteq A-I\}\\
        & = \{head(r)\mid r\in N, I\cap body_-(r)=\emptyset\}\\
        & = \{head(r)\mid r\in N, I\models body(r)\}\\
        & = gN^I,
\end{align*} where the second equality follows from $r_-=r$ as $r$ is negative, and the third equality follows from $body(M)=\emptyset$ as $M$ is a subset of $A-I$.
\end{proof}


\subsubsection{Answer set programs}

We now focus on reducts of arbitrary answer set programs. Our first observation is that the first identity in \prettyref{equ:^IH} can be lifted to the general case as
\begin{align*} 
    ^IP = (1^I)P.
\end{align*}


The next lemma shows that reducts are compatible with cup and union.

\begin{lemma} For any answer set program $P$ and interpretation $I$, we have
\begin{align} 
    \label{equ:gP_cup_R^I} g(P\cup R)^I&=gP^I\cup gR^I \quad\text{and}\quad g(P\sqcup R)^I=gP^I\sqcup gR^I\\
    \label{equ:^IP_cup_R}^I(P\cup R)&={^I}P\cup{^I}R \quad\text{and}\quad ^I(P\sqcup R)={^IP}\sqcup{^IR}\\
    \label{equ:P_cup_R^I} (P\cup R)^I&=P^I\cup R^I \quad\text{and}\quad (P\sqcup R)^I=P^I\sqcup R^I.
\end{align}
\end{lemma}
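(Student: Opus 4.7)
The overall strategy is to unfold the definitions of the three reduct operators ($gP^I$, ${^I}P$, $P^I$) and observe that each of them is nothing more than a rule-wise filter (possibly followed by the rule-wise transformation $pos$). Once one sees this, all six identities reduce to verifying that the appropriate filtering predicate commutes with $\cup$ and with $\sqcup$.

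For the union identities, there is essentially nothing to do. For $r$ a single rule, $r \in {^I}(P\cup R)$ iff $r \in P\cup R$ and $I \models head(r)$ iff $r \in {^I}P$ or $r \in {^I}R$, and the same pattern handles $(P\cup R)^I$. For the Gelfond--Lifschitz case, one first filters by the body-of-negation predicate and then applies $pos(\cdot)$ rule-wise; both steps distribute over $\cup$ because $\cup$ is just set-theoretic union.

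For the cup identities, the key auxiliary observation is that the cup is defined rule-wise via \prettyref{equ:r_sqcup_s}, and that for any two rules $r,s$ with $head(r)=head(s)$ the combined rule $r \sqcup s$ satisfies
\begin{align*}
head(r\sqcup s) &= head(r) = head(s),\\
body(r\sqcup s) &= body(r) \cup body(s),\\
pos(r\sqcup s) &= pos(r)\sqcup pos(s),\\
neg(r\sqcup s) &= neg(r)\sqcup neg(s).
\end{align*}
From these, the required equivalences are routine. For ${^I}(P\sqcup R)$, a combined rule survives iff its common head is in $I$, iff both originating rules survive in ${^I}P$ and ${^I}R$. For $(P\sqcup R)^I$, the combined rule survives iff $I\models body(r)\cup body(s)$, which holds iff $I\models body(r)$ and $I\models body(s)$, i.e.\ iff both $r\in P^I$ and $s\in R^I$. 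For $g(P\sqcup R)^I$, the combined rule contributes $pos(r\sqcup s)=pos(r)\sqcup pos(s)$ provided $I\models body(neg(r))\cup body(neg(s))$, which again splits into the two conditions defining $gP^I$ and $gR^I$; combined with the head-matching condition and the identity $pos(r)\sqcup pos(s)$ above, this matches the rule-wise definition of $gP^I\sqcup gR^I$.

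The only mild subtlety, which I would treat carefully in the write-up, is the direction ``$\supseteq$'' for the cup identities: a rule in $gP^I\sqcup gR^I$ arises from some $r'\in gP^I$ and $s'\in gR^I$ with matching heads, and one must trace these back to witnesses $r\in P$ and $s\in R$ with $pos(r)=r'$, $pos(s)=s'$, $head(r)=head(s)$, and $I\models body(neg(r))$, $I\models body(neg(s))$, in order to exhibit the corresponding rule in $g(P\sqcup R)^I$. This is immediate from the definitions but is the one point where the argument is not purely by symbol-pushing. The analogous ``$\supseteq$'' checks for the left and right reducts are direct since those reducts are honest filters of their inputs.
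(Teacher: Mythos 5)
Your proposal is correct and follows essentially the same route as the paper: the union identities are immediate from the rule-wise definitions of the reducts, and the cup identities are reduced via \prettyref{equ:r_sqcup_s} and the union identities to the two-rule case, where the head-matching condition and the splitting of $body(r)\cup body(s)$ (and of the satisfaction conditions) into the parts coming from $r$ and from $s$ do all the work. The ``$\supseteq$'' bookkeeping you flag is handled implicitly in the paper by writing the two-rule case as a chain of equalities, so there is no substantive difference.
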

\begin{proof} The identities on the left-hand side are immediate consequences of the rule-wise defintion of reducts. 

For the identities on the right-hand side, we first show that for any rules $r$ and $s$, we have
\begin{align}\label{equ:gr_sqcup_s^I} 
    g(\{r\}\sqcup\{s\})^I=g\{r\}^I\sqcup g\{s\}^I.
\end{align} For this, we distinguish two cases: (i) if $head(r)\neq head(s)$ then 
\begin{align*} 
    g(\{r\}\sqcup\{s\})^I=g\emptyset^I=\emptyset=g\{r\}^I\sqcup g\{s\}^I;
\end{align*} (ii) if $head(r)=head(s)$ then 
\begin{align*} 
    g(\{r\}\sqcup\{s\})^I&=g\{head(r)\leftarrow body(r,s)\}^I=\{\{r_+,s_+\}\mid I\models body_-(r,s)\}\\
    &=\{r_+\mid I\models body_-(r)\}\sqcup\{s_+\mid I\models body_-(s)\}\\
    &=g\{r\}^I\sqcup g\{s\}^I.
\end{align*} Now we have
\begin{align*} 
    g(P\sqcup R)^I&\stackrel{\prettyref{equ:r_sqcup_s}}=g\left(\bigcup_{\substack{r\in P\\s\in R}}(\{r\}\sqcup\{s\})\right)^I\stackrel{\prettyref{equ:gP_cup_R^I}}=\bigcup_{\substack{r\in P\\s\in R}}g(\{r\}\sqcup\{s\})^I\\
    &\stackrel{\prettyref{equ:gr_sqcup_s^I}}=\bigcup_{\substack{r\in P\\s\in R}}(g\{r\}^I\sqcup g\{s\}^I)=\bigcup_{\substack{r\in gP^I\\s\in gR^I}}(\{r\}\sqcup\{s\})\stackrel{\prettyref{equ:r_sqcup_s}}=gP^I\sqcup gR^I.
\end{align*} The proofs of the remaining identities are analogous.

\end{proof}

We are now ready to express the Gelfond-Lifschitz and Faber-Leone-Pfeifer reducts via composition, cup, and union.

\begin{theorem}\label{t:reducts} For any answer set program $P$ and interpretation $I$, we have
\begin{align*} 
    gP^I=\bigcup_{r\in P}[\{r_+\}\sqcup\{r_-\}I]
\end{align*} and
\begin{align*} 
    P^I=\bigcup_{r\in P}\left(\{r_+\}1^I\sqcup\{horn(r_-)\}1^{A-I}\right).
\end{align*}
\end{theorem}
\begin{proof} We first compute, for any rule $r$,
\begin{align*} 
    g\{r\}^I\stackrel{\prettyref{equ:gP_cup_R^I}}=g\{r_+\}^I\sqcup g\{r_-\}^I\stackrel{\prettyref{equ:gH^I},\prettyref{equ:gN^I}}=\{r_+\}\sqcup\{r_-\}I,
\end{align*} extended to any answer set program $P$ by
\begin{align*} 
    gP^I\stackrel{\prettyref{equ:gP_cup_R^I}}=\bigcup_{r\in P}g\{r\}^I=\bigcup_{r\in P}[\{r_+\}\sqcup\{r_-\}I].
\end{align*} For the Faber-Leone-Pfeifer reduct, we have
\begin{align*} 
    P^I&=\bigcup_{r\in P}\{r\}^I=\bigcup_{r\in P}(\{r_+\}\sqcup\{r_-\})^I\stackrel{\prettyref{equ:P_cup_R^I}}=\bigcup_{r\in P}\left(\{r_+\}^I\sqcup\{r_-\}^I\right)\\
    & \stackrel{\prettyref{equ:^IH},\prettyref{equ:N^I}}=\bigcup_{r\in P}\left(\{r_+\}1^I\sqcup\{horn(r_-)\}1^{A-I}\right).
\end{align*}
\end{proof}

\subsection{Adding and removing body literals}

We now want to study algebraic transformations of rule bodies.

\subsubsection{Horn programs}

We shall first recall here some basic constructions and results concerning the manipulation of Horn programs by following the lines of \citeA{Antic21-1}. 

In what follows, $H$ denotes a Horn program. 

For example, we have
\begin{align*} 
    \{a\leftarrow b,c\}\left\{
        \begin{array}{l}
            b \leftarrow b\\
            c
        \end{array}
        \right\} = \{a\leftarrow b\}.
\end{align*} The general construction here is that we add a tautological rule $b\leftarrow b$ for every body atom $b$ of $H$ which we want to preserve, and we add a fact $c$ in case we want to remove $c$ from the rule bodies in $H$. 

\begin{definition} For any interpretation $I$, define
\begin{align*} 
    I^\ominus:=1^{A-I}\cup I.
\end{align*} Notice that $.^\ominus$ is computed with respect to some fixed alphabet $A$.
\end{definition}
 
For instance, we have
\begin{align*} 
    A^\ominus=A \quad\text{and}\quad\emptyset^\ominus=1.
\end{align*} 

Interestingly enough, we have
\begin{align*} 
    I^\ominus I=(1^{A-I}\cup I)I\stackrel{\prettyref{equ:P_cup_R_Q}}=1^{A-I}I\cup I^2\stackrel{\prettyref{equ:IP=I},\prettyref{equ:^IH},\prettyref{equ:^JI}}=((A-I)\cap I)\cup I=I
\end{align*} and
\begin{align*} 
    I^\ominus H=(1^{A-I}\cup I)H\stackrel{\prettyref{equ:bigcup}}=1^{A-I}H\cup IH \stackrel{\prettyref{equ:IP=I},\prettyref{equ:^IH}}={^{A-I}}H\cup I.
\end{align*}

In the example above, we have
\begin{align*} 
    \{c\}^\ominus= \left\{
    \begin{array}{l}
        a\leftarrow a\\
        b\leftarrow b\\
        c
    \end{array}
    \right\} \quad\text{and}\quad\{a\leftarrow b,c\}\{c\}^\ominus=\{a\leftarrow b\}
\end{align*} as desired. Notice also that the facts of a program are, of course, not affected by composition on the right (cf. \prettyref{equ:facts_PR}), that is, we cannot expect to remove facts via composition on the right.\footnote{However, notice that we can \textit{add} facts via composition on the left via $P\cup I=(1\cup I)P$ (cf. \prettyref{equ:P_cup_I}).}

We have the following general result:

\begin{fact}\label{f:ominus} For any Horn program $H$ and interpretation $I$, we have
\begin{align*} 
    HI^\ominus=\{head(r)\leftarrow (body(r)-I)\mid r\in H\}.
\end{align*}
\end{fact}


In analogy to the above construction, we can add body literals to Horn programs via composition on the right. For example, we have
\begin{align*} 
    \{a\leftarrow b\}\{b\leftarrow b,not\ c\}=\{a\leftarrow b,not\ c\}.
\end{align*} Here, the general construction is as follows. 

\begin{definition} For any set of literals $B$, define
\begin{align*} 
    B^\oplus:=\{a\leftarrow(\{a\}\cup B)\mid a\in A\}.
\end{align*} Notice that $.^\oplus$ is computed with respect to some fixed alphabet $A$.
\end{definition}

For instance, we have
\begin{align*} 
    A^\oplus=\{a\leftarrow A\mid a\in A\} \quad\text{and}\quad\emptyset^\oplus=1.
\end{align*} Interestingly enough, we have for any interpretation $I$,
\begin{align*} 
    I^\oplus I^\ominus=I^\ominus \quad\text{and}\quad I^\oplus I=I.
\end{align*} Moreover, in the example above, we have
\begin{align*} 
    \{not\ c\}^\oplus= \left\{
        \begin{array}{l}
            a\leftarrow a,not\ c\\
            b\leftarrow b,not\ c\\
            c\leftarrow not\ c
        \end{array}
        \right\} \quad\text{and}\quad\{a\leftarrow b\}\{not\ c\}^\oplus=\{a\leftarrow b,not\ c\}
\end{align*} as desired. As composition on the right does not affect the facts of a program, we cannot expect to append body literals to facts via composition on the right. However, we can add arbitrary literals to \textit{all} proper rule bodies simultaneously and, in analogy to \prettyref{f:ominus}, we have the following general result:

\begin{fact} For any Horn program $H$ and set of literals $B$, we have
\begin{align*} 
    HB^\oplus=facts(H)\cup\{head(r)\leftarrow (body(r)\cup B)\mid r\in proper(H)\}.
\end{align*}
\end{fact}

The following example illustrates the interplay between the above concepts:

\begin{example} Consider the Horn programs
\begin{align*} 
    H= \left\{
    \begin{array}{l}
        c\\
        a\leftarrow b,c\\
        b\leftarrow a,c
    \end{array}
    \right\} \quad\text{and}\quad \pi_{(a\;b)}= \left\{
    \begin{array}{l}
        a\leftarrow b\\
        b\leftarrow a
    \end{array}
    \right\}.
\end{align*} Roughly, we obtain $H$ from $\pi_{(a\;b)}$ by adding the fact $c$ to $\pi_{(a\;b)}$ and to each body rule in $\pi_{(a\;b)}$. Conversely, we obtain $\pi_{(a\;b)}$ from $H$ by removing the fact $c$ from $H$ and by removing the body atom $c$ from each rule in $H$. This can be formalized as (here, we define $\{c\}^\ast := 1\cup\{c\}$ which yields $\{c\}^\ast\pi_{(a\;b)}=\pi_{(a\;b)}\cup\{c\}$; see the forthcoming equation \prettyref{equ:I^ast})
\begin{align*} 
    H=(\{c\}^\ast\pi_{(a\;b)})\{c\}^\oplus \quad\text{and}\quad\pi_{(a\;b)}=(1^{\{a,b\}}H)\{c\}^\ominus. 
\end{align*}
\end{example}

\subsubsection{Negative programs}

Removing body literals from negative programs is similar to the Horn case above. For example, if we want to remove the literal $not\ c$ from the rule body of $a\leftarrow not\ b,not\ c$, we compute
\begin{align*} 
    \{a\leftarrow not\ b,not\ c\} 1^{\{a,b,c\}-\{c\}}
        &\stackrel{\prettyref{equ:NR}}= horn(\{a\leftarrow not\ b,not\ c\})(not\ 1^{\{a,b\}})\\
       &=\{a\leftarrow b,c\}\left\{
            \begin{array}{l}
                a\leftarrow not\ a\\
                b\leftarrow not\ b\\
                c
            \end{array}
            \right\}\\
       &=\{a\leftarrow not\ b\}.
\end{align*}

We have the following general result:

\begin{proposition}\label{p:N} For any negative program $N$ and interpretation $I$, we have
\begin{align*} 
    N1^{A-I} = \{head(r)\leftarrow (body(r)-\{not\ a\mid a\in I\})\mid r\in N\}.
\end{align*}
\end{proposition}
\begin{proof} We compute
\begin{align*} 
    N1^{A-I}
        &\stackrel{\prettyref{equ:NR}}= horn(N)(not\ 1^{A-I})\\
        &= horn(N)(\{a\leftarrow not\ a\mid a\in A-I\}\cup I)\\
        &= \{head(r)\leftarrow ((body(r)-I)\cup\{not\ a\mid a\in (A-I)\cap body(r)\})\mid r\in horn(N)\}\\
        &= \{head(r)\leftarrow\{not\ a\mid a\in (A-I)\cap body(r)\}\mid r\in horn(N)\}\\
        &= \{head(r)\leftarrow (body(r)-\{not\ a\mid a\in I\})\mid r\in N\}.
\end{align*}
\end{proof}

Adding literals to bodies of negative programs via composition is not possible as composition with negative rules yields disjunctions in rule bodies as is demonstrated by the following simple computation:
\begin{align*} \{a\leftarrow not\ b\}\{b\leftarrow b,c\}= \left\{
\begin{array}{l}
    a\leftarrow not\ b\\
    a\leftarrow not\ c
\end{array}
\right\}.
\end{align*}

\subsubsection{Answer set programs}\label{§:Transformation_ASP}

Unfortunately, systematically transforming the bodies of arbitrary programs requires more refined algebraic techniques in which the positive and negative parts of rules and programs can be manipulated separately, and we shall leave this problem as future work (cf. \prettyref{§:Conclusion}).

\section{Algebraic semantics}\label{§:Algebraic}

In this section, we reformulate the fixed point semantics of answer set programs in terms of composition without any explicit reference to operators. Our key observation is that the van Emden-Kowalski immediate consequence operator of a program can be algebraically represented via composition (\prettyref{t:T_P}), which implies an algebraic characterization of the answer set semantics (\prettyref{t:AS}).

\subsection{The van Emden-Kowalski operator}

\prettyref{t:semantics} emphasizes the central role of the van Emden-Kowalski operator in answer set programming and the next result shows that it can be syntactically represented in terms of composition.

\begin{theorem}\label{t:T_P} For any answer set program $P$ and interpretation $I$, we have
\begin{align}\label{equ:T_P} 
    T_P(I) = PI.
\end{align} Moreover, we have
\begin{align*} 
    T_P\cup T_R = T_{P\cup R} \quad\text{and}\quad T_P\circ T_R = T_{P\circ R} \quad\text{and}\quad T_\emptyset = \emptyset\quad\text{and}\quad T_1 = Id_{2^A}.
\end{align*}
\end{theorem}
\begin{proof} All of the equations follow immediately from the definition of composition. For example, we compute
\begin{align*} 
    PI
        &= \left\{head(r)\leftarrow body(S,N) \;\middle|\;
            \begin{array}{l} 
                r\in P\\
                S\subseteq_{sz(r_+)}I\\
                N\subseteq_{sz(r_-)}not\ I\\
                head(S) = body_+(r)\\
                head(N) = body_-(r)
            \end{array}\right\}\\
        &= \{head(r)\mid r\in P, body_+(r)\subseteq I, body_-(r)\subseteq A-I\}\\
        &= \{head(r)\mid r\in P, I\models body(r)\}\\
        &= T_P(I),
\end{align*} where the second equality follows from
\begin{align*} 
    not\ I = A-I \quad\text{and}\quad body(S) = body(N) = \emptyset.
\end{align*}
\end{proof}

As a direct consequence of Theorems \ref{t:semantics} and \ref{t:T_P}, we have the following algebraic characterization of (supported) models in terms of composition:

\begin{corollary} An interpretation $I$ is a model of $P$ iff $PI\subseteq I$, and $I$ is a supported model of $P$ iff $PI=I$.
\end{corollary}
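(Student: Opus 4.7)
The plan is to derive the corollary by directly combining the two results already established: Theorem~\ref{thm:semantics} (the classical characterization of models and supported models as pre-fixed and fixed points of the van Emden--Kowalski operator) and Theorem~\ref{thm:T_P} (the identity $T_P(I)=P\circ I$). Since both halves of the corollary are bi-implications involving $T_P(I)$, the proof amounts to substituting $P\circ I$ for $T_P(I)$ on each side.

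More concretely, for the first statement I would chain equivalences: $I$ is a model of $P$ iff, by Theorem~\ref{thm:semantics}(1), $I$ is a prefixed point of $T_P$, i.e.\ $T_P(I)\subseteq I$; and by Theorem~\ref{thm:T_P} this is the same as $P\circ I\subseteq I$. For the second statement, I would invoke the definition of supported model (given in Section~\ref{sec:Semantics} as a fixed point of $T_P$) together with Theorem~\ref{thm:T_P} to conclude $I=T_P(I)=P\circ I$.

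There is essentially no obstacle here: the nontrivial algebraic content is packaged in Theorem~\ref{thm:T_P}, and the semantic content in Theorem~\ref{thm:semantics}. The one small thing worth spelling out, if desired, is that the order on interpretations is set inclusion, so the poset-theoretic inequality $T_P(I)\leq I$ used in Theorem~\ref{thm:semantics} coincides with the $\subseteq$ appearing in the statement of the corollary; similarly, equality of sets matches equality of elements in the lattice. No further computation is required.
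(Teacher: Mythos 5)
Your proof is correct and matches the paper exactly: the corollary is stated there as a direct consequence of Theorem~\ref{thm:semantics} (models and supported models as prefixed and fixed points of $T_P$) and Theorem~\ref{thm:T_P} ($T_P(I)=P\circ I$), which is precisely the substitution argument you give.
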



\begin{corollary}\label{c:I_ideal} The space of all interpretations forms an ideal.
\end{corollary}
\begin{proof} By \prettyref{f:IP=I}, we know that the space of interpretations forms a right ideal and \prettyref{t:T_P} implies that it is a left ideal --- hence, it forms an ideal.
\end{proof}

\begin{corollary} For any answer set program $P$ and interpretation $I$, we have $PI = IP$ iff $I$ is a supported model of $P$.
\end{corollary}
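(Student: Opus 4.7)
The plan is to combine two facts already established in the excerpt: Proposition \ref{prop:IP=I}, which tells us that every interpretation acts as a left zero under composition (so $I\circ P=I$ holds \emph{unconditionally}), and Theorem \ref{thm:T_P}, which identifies $P\circ I$ with $T_P(I)$. Once both are in hand, the corollary reduces to a one-line chain of equivalences, so there is essentially no main obstacle to worry about --- the only thing to check is that the definition of supported model (fixed point of $T_P$) is being used correctly.

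Concretely, I would argue as follows. First, invoke Proposition \ref{prop:IP=I} to replace the right-hand side of the equation $P\circ I = I\circ P$ by $I$, yielding the equivalent condition $P\circ I = I$. Second, apply Theorem \ref{thm:T_P} to rewrite the left-hand side as $T_P(I)$, so the condition becomes $T_P(I)=I$. Finally, recall from the discussion following Theorem \ref{thm:semantics} (where supported models were defined as fixed points of $T_P$) that this is exactly what it means for $I$ to be a supported model of $P$. Each step is an ``iff'', so chaining them yields the desired biconditional in both directions simultaneously.

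The proof is thus a two-line citation rather than a genuine calculation, and no intermediate auxiliary construction is needed. If anything deserves a brief remark, it is the asymmetry of the situation: the right-hand side $I\circ P$ collapses trivially because interpretations are left zeros, while the left-hand side $P\circ I$ encodes the nontrivial semantic content via the van Emden-Kowalski operator. This asymmetry is what turns the equation $P\circ I = I\circ P$ into a characterization of supported models rather than a tautology.
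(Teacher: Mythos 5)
Your proof is correct and follows exactly the paper's route: the paper also derives the corollary directly from Proposition \ref{prop:IP=I} and Theorem \ref{thm:T_P}, and your chain $P\circ I=I\circ P\Leftrightarrow P\circ I=I\Leftrightarrow T_P(I)=I$ is precisely the intended argument. Your closing remark on the asymmetry between the two sides is a nice observation but not needed for the proof.
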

\begin{proof} A direct consequence of \prettyref{f:IP=I} and \prettyref{t:T_P}.
\end{proof}


\subsection{Answer sets}\label{§:AS}

We interpret programs according to their answer set semantics and since answer sets can be constructively computed by bottom-up iterations of the associated van Emden-Kowalski operators (cf. \prettyref{t:semantics}), we can finally reformulate the fixed point semantics of answer set programs in terms of sequential composition (\prettyref{t:AS}).

\begin{definition} Define the \textit{\textbf{Kleene star}} of a Horn program $H$ by
\begin{align*} 
    H^\ast := \bigcup_{n\geq 0} H^n,
\end{align*} where
\begin{align*} 
    H^n := (((HH)H)\ldots H)H\quad (n\text{ times}).
\end{align*} Moreover, define the \textit{\textbf{omega}} operation by
\begin{align*} 
    H^\omega := H^\ast\emptyset \stackrel{\prettyref{equ:facts_H}}= facts(H^\ast).
\end{align*}
\end{definition}

Notice that the unions in the computation of Kleene star are finite since $H$ is finite. For instance, for any interpretation $I$, we have as a consequence of \prettyref{f:IP=I},
\begin{align}\label{equ:I^ast} 
    I^\ast = 1\cup I\quad\text{and}\quad I^\omega = I.
\end{align} Interestingly enough, we can add the atoms in $I$ to $P$ via
\begin{align}\label{equ:P_cup_I} 
    P\cup I \stackrel{\prettyref{equ:IP=I}}= P\cup IP \stackrel{\prettyref{equ:P_cup_R_Q}}= (1\cup I)P \stackrel{\prettyref{equ:I^ast}}= I^\ast P.
\end{align} Hence, as a consequence of \prettyref{equ:P} and \prettyref{equ:P_cup_I}, we can decompose $P$ as
\begin{align*} 
    P = facts(P)^\ast\circ proper(P),
\end{align*} which, roughly, says that we can sequentially separate the facts from the proper rules in $P$.

We have the following algebraic characterization of least models due to \citeA{Antic21-1}:

\begin{theorem}\label{t:LM_H} For any Horn program $H$, we have
\begin{align*} 
    LM(H) = H^\omega.
\end{align*} Consequently, two Horn programs $H$ and $G$ are equivalent iff $H^\omega=G^\omega$.
\end{theorem}

We have finally arrived at the following algebraic characterization of answer sets in terms of composition, cup, and union (cf. \prettyref{t:reducts}):

\begin{theorem}\label{t:AS} An interpretation $I$ is an answer set of a program $P$ iff $I=(gP^I)^\omega$.
\end{theorem}
\begin{proof} The interpretation $I$ is an answer set of $P$ iff $I$ is the least model of $gP^I$ iff $I$ is the least fixed point of $T_{gP^I}$ (\prettyref{t:semantics}). By \prettyref{t:T_P}, we have
\begin{align*} 
    T_{gP^I}(J) = gP^IJ\quad\text{for every interpretation $J$.}
\end{align*} Hence, as $gP^I$ is Horn, \prettyref{t:LM_H} implies that $I$ is an answer set of $P$ iff $I=(gP^I)^\omega$.
\end{proof}

\begin{corollary}\label{c:strong} Two answer set programs $P$ and $R$ are strongly equivalent iff
\begin{align*} 
    I = (gP^I\cup gQ^I)^\omega \quad\Leftrightarrow\quad I = (gR^I\cup gQ^I)^\omega
\end{align*} holds for any interpretation $I$ and program $Q$.
\end{corollary}
\begin{proof} A direct consequence of \prettyref{equ:gP_cup_R^I} and \prettyref{t:AS}.
\end{proof}

\begin{corollary}\label{c:uniform} Two answer set programs $P$ and $R$ are uniformly equivalent iff
\begin{align*} 
    I = (g(J^\ast P)^I)^\omega \quad\Leftrightarrow\quad I = (g(J^\ast R)^I)^\omega\quad\text{for any interpretations $I$ and $J$.}
\end{align*}
\end{corollary}
\begin{proof} A direct consequence of \prettyref{equ:P_cup_I} and \prettyref{t:AS}.
\end{proof}

\section{Index and period}\label{§:Index}

In the above sections, we have \textit{algebraically} reformulated well-known concepts like the least model semantics of Horn programs and uniform and strong equivalence of answer set programs. In this section, we demonstrate the benefits of having such an algebraic description of the structure of answer set programs by transferring a purely algebraic concept into the domain of answer set programming.

For a given program $P$, consider the programs $[P]:=\{P,P^2,P^3,\ldots\}$ generated by $P$. If there are no repetitions in the list $P^2,P^3,\ldots$, then $([P],\circ)$ is isomorphic to $(\mathbb N,+)$. In this case, we say that $P$ has \textit{infinite order}. Notice that this case can only occur for \textit{infinite} programs (i.e. infinite propositional programs or first-order programs with an infinite grounding which are both not considered in this paper). Otherwise, if there are repetitions among the powers of $P$, then the set
\begin{align*} 
    \{m\in \mathbb N \mid P^m=P^n,\text{ for some $m\neq n$}\}
\end{align*} is non-empty and thus has a least element which we will call the \textit{\textbf{index}} of $P$ denoted by $index(P)$. Then the set
\begin{align*} 
    \left\{r\in \mathbb N \;\middle|\; P^{index(P)+r} = P^{index(P)}\right\}
\end{align*} is also non-empty and has a least element which we will call the \textit{\textbf{period}} of $P$ denoted by $period(P)$. Intuitively, the index and period of a program contains information about the cyclicality of a program as we shall demonstrate in the rest of this section.

\begin{fact} Every idempotent program has period 1 and index 1. This holds in particular for every interpretation.
\end{fact}
\begin{proof} The first statement is obvious and the second statement is a direct consequence of \prettyref{equ:IP=I}.
\end{proof}

\subsection{Elevators}

In this section, we shall construct programs with index $m$ and period 1 (cf. \prettyref{p:E_m}).

For every $m\geq 1$, define the \textit{\textbf{$m$-elevator}} by the Krom program
\begin{align*} 
    E_m := \{1\}\cup \{a+1\leftarrow a\mid 1\leq a<m\} = \left\{
    \begin{array}{l}
        1\\
        2\leftarrow 1\\
        \quad\vdots\\
        m\leftarrow {m-1}
    \end{array}
    \right\}.
\end{align*} For example, we have
\begin{align*} 
    E_1=\{1\},\quad E_2= \left\{
    \begin{array}{l}
        1\\
        2\leftarrow 1
    \end{array}
    \right\},\quad E_3= \left\{
    \begin{array}{l}
        1\\
        2\leftarrow 1\\
        3\leftarrow 2
    \end{array}
    \right\},\quad\ldots.
\end{align*} Notice that
\begin{align*} 
    E_m\subseteq E_n,\quad\text{for all $m\leq n$.}
\end{align*} 

For every $1\leq k\leq m$, we have
\begin{align*} 
    E_m^k=\{1,\ldots,k\}\cup \{a+1\leftarrow a+1-k\mid k\leq a\leq m-1\}
\end{align*} which shows that for all $k,\ell\leq m-1$,
\begin{align*} 
    E_m^k=E_m^\ell \quad\Rightarrow\quad k=\ell
\end{align*} and
\begin{align*} 
    E_m^m=[m].
\end{align*} For example, for
\begin{align*} 
    E_4= \left\{
    \begin{array}{l}
        1\\
        2\leftarrow 1\\
        3\leftarrow 2\\
        4\leftarrow 3
    \end{array}
    \right\}
\end{align*} we obtain the powers
\begin{align*} 
    E_4,\quad E_4^2= \left\{
    \begin{array}{l}
        1\\
        2\\
        3\leftarrow 1\\
        4\leftarrow 2
    \end{array}
    \right\},\quad E_4^3= \left\{
    \begin{array}{l}
        1\\
        2\\
        3\\
        4\leftarrow 1
    \end{array}
    \right\},\quad E_4^4= \left\{
    \begin{array}{l}
        1\\
        2\\
        3\\
        4
    \end{array}
    \right\}.
\end{align*}

Hence, we have shown:

\begin{proposition}\label{p:E_m} The $m$-elevator $E_m$ has index $m$ and period 1.
\end{proposition}

\subsection{Permutations}

Recall from \prettyref{§:pi}, that with every permutation $\pi:[n]\to [n]$,\footnote{For $n\in \mathbb N$, define $[n] := \{1,\ldots,n\}$.} $n\geq 1$, we associate the \textit{\textbf{permutation}}
\begin{align*} 
    \pi:=\{\pi(i)\leftarrow i\mid i\in [n]\}.
\end{align*} We use the well-known cycle notation so that
\begin{align*} 
    \pi_{(1\;\ldots\;n)}= \left\{
    \begin{array}{l}
        1\leftarrow n\\
        2\leftarrow 1\\
        3\leftarrow 2\\
        \vdots\\
        n\leftarrow {n-1}
    \end{array}
    \right\}.
\end{align*} In what follows, we shall write $\pi_n$ instead of $\pi_{(1\;\ldots\;n)}$ and we call $\pi_n$ the \textit{\textbf{$n$-permutation}}. The \textit{\textbf{identity $n$-permutation}} is denoted by
\begin{align*} 
    \mathbbm 1_n := \{i \leftarrow i\mid i\in [n]\}.
\end{align*} Of course,
\begin{align*} 
    index( \mathbbm 1_n) = period( \mathbbm 1_n)=1.
\end{align*} We always have
\begin{align*} 
    \pi_n^n = \mathbbm 1_n.
\end{align*}

Notice that we have
\begin{align*} 
    E_m=(\pi_m-\{1\leftarrow m\})\cup \{1\}.
\end{align*} That is, $E_m$ and $\pi_m$ differ only in the rule with head atom $1$.

\begin{fact}\label{f: 230905-pi_n} The $n$-permutation $\pi_n$ has index 1 and period $n$. Moreover, all powers $\pi_n^k$, $k\geq 1$, have index 1 and period $n$.
\end{fact}

\begin{example} The permutation program $\pi_{(1\;2)}$ has index 1 and period 2 since
\begin{align*} 
    \pi_{(1\;2)}^2 = \mathbbm 1_{\{1,2\}} \quad\text{and}\quad \pi_{(1\;2)}^3 = \pi_{(1\;2)}.
\end{align*}
\end{example}

\subsubsection{$K_{m,n}$}

We now wish to construct a Krom-Horn program $K_{m,n}$ with index $m$ and period $n$. For this, let
\begin{align*} 
    K_{m,n} := E_m\cup \pi_{(m+1\;\ldots\;m+n)}.
\end{align*} It is important to notice that we have
\begin{align}\label{equ:E_m_cap_pi=0} 
    E_m\cap \pi_{(m+1\;\ldots\;m+n)}=\emptyset.
\end{align}

\begin{theorem} The program $K_{m,n}$ has index $m$ and period $n$.
\end{theorem}
\begin{proof} Since $K_{m,n}$ is Krom, we have by \prettyref{t:Krom} (we write $\pi$ instead of $\pi_{(m+1\;\ldots\;m+n)}$)
\begin{align*} 
    K_{m,n}^2=(E_m\cup \pi)(E_m\cup \pi)=E_m^2\cup \pi E_m\cup E_m\pi\cup \pi^2=E_m^2\cup \pi^2,
\end{align*} where the last identity follows from
\begin{align*} 
    E_m\pi\stackrel{\prettyref{equ:E_m_cap_pi=0}}=\{1\}\subseteq E_m^2 \quad\text{and}\quad \pi E_m\stackrel{\prettyref{equ:E_m_cap_pi=0}}=\emptyset.
\end{align*} Notice that we have
\begin{align}\label{equ: 230905-E_m_pi^k} 
    E_m\pi^k\stackrel{\prettyref{equ:E_m_cap_pi=0}}=\{1\}\subseteq E_m^\ell \quad\text{and}\quad \pi^k E_m\stackrel{\prettyref{equ:E_m_cap_pi=0}}=\emptyset,\quad\text{for all $1\leq\ell\leq k$.}
\end{align} Another iteration yields
\begin{align*} 
    K_{m,n}^3=(E_m\cup \pi)(E_m^2\cup \pi^2)=E_m^3\cup E_m\pi^2\cup \pi E_m^2\cup \pi^3 \stackrel{\prettyref{equ: 230905-E_m_pi^k}}=E_m^3\cup \pi^3.
\end{align*} This leads us to the following general formula which is shown by a straightforward induction using the identities in \prettyref{equ: 230905-E_m_pi^k}:
\begin{align*} 
    K_{m,n}^k=E_m^k\cup \pi_{(m+1\;\ldots\;m+n)}^k.
\end{align*}

For any $1\leq k,\ell\leq m-1$, we have
\begin{align*} 
    E_m^k\neq E_m^\ell
\end{align*} and since
\begin{align*} 
    E_m^k\cap \pi=E_m^\ell\cap \pi=\emptyset,
\end{align*} this shows
\begin{align*} 
    K_{m,n}^k\neq K_{m,n}^\ell.
\end{align*} Moreover, we have
\begin{align*} 
    K_{m,n}^m=[m]\cup \pi^m.
\end{align*} and thus
\begin{align*} 
    K_{m,n}^{m+i}=[m]\cup \pi^{m+i}
\end{align*} which by \prettyref{f: 230905-pi_n} implies that $n$ is the least positive integer such that
\begin{align*} 
    K_{m,n}^{m+n}=[m]\cup \pi^{m+n}=[m]\cup \pi^m=K_{m,n}^m.
\end{align*} Hence,
\begin{align*} 
    period(K_{m,n}^m)=period(\pi^m)=period(\pi)=n.
\end{align*} In total we have thus shown
\begin{align*} 
    index(K_{m,n})=m \quad\text{and}\quad period(K_{m,n})=n.
\end{align*}
\end{proof}

\subsection{Negative programs}

In this section, we will see that the cyclicality of negative programs is quite different from the positive ones which is due to the alternating nature of (double) negation.

\subsubsection{Negative elevators}

The \textit{\textbf{negative $m$-elevator}} is given by
\begin{align*} 
    not\;E_m=\{2,\ldots,m\}\cup \{a+1 \leftarrow not\; a \mid 1\leq a\leq m-1\}=\left\{
    \begin{array}{l}
        2\\
        \vdots\\
        m\\
        2\leftarrow not\; 1\\
        \vdots\\
        m\leftarrow not\;(m-1)
    \end{array}
    \right\}.
\end{align*} For every $1\leq k\leq m$, we have
\begin{align*} 
    (not\;E_m)^k=\begin{cases}
        \{2,\ldots,m\}\cup \{a+1 \leftarrow a-k+1\mid k\leq a\leq m-1\} & \text{if $k$ is even,}\\
        \{2,\ldots,m\}\cup \{a+1 \leftarrow not\;(a-k+1)\mid k\leq a\leq m-1\} & \text{if $k$ is odd.}\\
    \end{cases} 
\end{align*} For example, we have
\begin{align*} 
    (not\;E_m)^2=\left\{
    \begin{array}{l}
        2\\
        \vdots\\
        m\\
        3\leftarrow 1\\
        4 \leftarrow 2\\
        \vdots\\
        m\leftarrow {m-2}
    \end{array}
    \right\},\quad (not\;E_m)^3= \left\{
    \begin{array}{l}
        2\\
        \vdots\\
        m\\
        4 \leftarrow not\; 1\\
        \vdots\\
        m \leftarrow not\;(m-3)
    \end{array}
    \right\},\quad (not\;E_m)^4= \left\{
    \begin{array}{l}
        2\\
        \vdots\\
        m\\
        5 \leftarrow 1\\
        \vdots\\
        m \leftarrow m-4
    \end{array}
    \right\}.
\end{align*} In particular, we have
\begin{align*} 
    (not\;E_m)^m=[m]=E_m^m,
\end{align*} regardless of whether $m$ is even or odd.

We have thus shown:

\begin{proposition} The negative $m$-elevator $not\;E_m$ has index $m$ and period 1, that is,
\begin{align*} 
    index(not\;E_m)=index(E_m)=m \quad\text{and}\quad period(not\;E_m)=period(E_m)=1.
\end{align*}
\end{proposition}

\subsubsection{Negative permutations}

The \textit{\textbf{negative $n$-permutation}} is given by
\begin{align*} 
    not\;\pi_n= \left\{
    \begin{array}{l}
        2 \leftarrow not\;1\\
        \vdots\\
        n \leftarrow not\;(n-1)\\
        1 \leftarrow not\;n
    \end{array}
    \right\}.
\end{align*} We have
\begin{align}\label{equ: 230905-not_pi_n^k} 
    (not\;\pi_n)^k=\begin{cases}
        \pi_n^k & \text{if $k$ is even},\\
        not\;\pi_n^k & \text{if $k$ is odd}.
    \end{cases}
\end{align} In particular, we have
\begin{align}\label{equ: 230905-not_pi_n} 
    (not\;\pi_n)^n=\begin{cases}
        \mathbbm 1_n & \text{if $n$ is even},\\
        not\;\mathbbm 1_n & \text{if $n$ is odd}.
    \end{cases}
\end{align} We thus have
\begin{align*} 
    \text{$n$ is even} \quad\Rightarrow\quad period(not\;\pi_n)=n.
\end{align*} What if $n$ is odd? We claim:
\begin{align*} 
    \text{$n$ is odd} \quad\Rightarrow\quad period(not\;\pi_n)=2n.
\end{align*} 

We have
\begin{align*} 
    (not\; \mathbbm 1_n)^2= \mathbbm 1_n^2= \mathbbm 1_n,
\end{align*} which by \prettyref{equ: 230905-not_pi_n} implies
\begin{align*} 
    (not\;\pi_n)^{2n}=\mathbbm 1_n
\end{align*} and thus
\begin{align*} 
    (not\;\pi_n)^{1+2n}=not\;\pi_n.
\end{align*} 

It remains to show
\begin{align}\label{equ: 230905-not_pi_n_ldots_1} 
    not\;\pi_n\neq (not\;\pi_n)^2\neq\ldots\neq (not\;\pi_n)^{2n}.
\end{align} Since $n$ is odd by assumption, by \prettyref{equ: 230905-not_pi_n^k} this sequence of powers equals
\begin{align}\label{equ: 230905-not_pi_n_ldots_2} 
    not\;\pi_n,\quad \pi_n^2,\quad\ldots\quad not\;\pi_n^{n-2},\quad \pi_n^{n-1},\quad not\;\pi_n^n,\quad \pi_n^{n+1},\quad not\;\pi_n^{n+2},\quad\ldots\quad not\;\pi_n^{2n-1},\quad \pi_n^{2n}.
\end{align} Notice that since $n$ is odd, we have, for all $1\leq k\leq n$,
\begin{align*} 
    (not\;\pi_n)^{n+k}=\begin{cases}
        \pi_n^k & \text{if $k$ is odd},\\
        not\;\pi_n^k & \text{if $k$ is even}.
    \end{cases}
\end{align*} In particular, we have
\begin{align*} 
    not\;\pi_n^{2n}=\pi_n^n.
\end{align*} Hence, the sequence in \prettyref{equ: 230905-not_pi_n_ldots_2} can be written as
\begin{align*} 
    not\;\pi_n,\quad \pi_n^2,\quad\ldots\quad not\;\pi_n^{n-2},\quad \pi_n^{n-1},\quad not\;\pi_n^n,\quad \pi_n,\quad not\;\pi_n^2,\quad\ldots\quad not\;\pi_n^{n-1},\quad \pi_n^n.
\end{align*} This immediately shows \prettyref{equ: 230905-not_pi_n_ldots_1}.

For example, for $n=3$, the $2n=6$ distinct powers of $not\;\pi_3$ are given by
\begin{align*}
    (not\;\pi_3)^1&= \left\{
    \begin{array}{l}
        2 \leftarrow not\;1\\
        3 \leftarrow not\;2\\
        1 \leftarrow not\;3
    \end{array}
    \right\},\quad (not\;\pi_3)^2=\pi_3^2= \left\{
    \begin{array}{l}
        2 \leftarrow 3\\
        3 \leftarrow 1\\
        1 \leftarrow 2
    \end{array}
    \right\},\quad (not\;\pi_3)^3=not\;\pi_3^3= \left\{
    \begin{array}{l}
        2 \leftarrow not\;2\\
        3 \leftarrow not\;3\\
        1 \leftarrow not\;1
    \end{array}
    \right\}\\
    (not\;\pi_3)^4&=\pi_3^4= \left\{
    \begin{array}{l}
        2 \leftarrow 1\\
        3 \leftarrow 2\\
        1 \leftarrow 3
    \end{array}
    \right\},\quad (not\;\pi_3)^5=not\;\pi_3^5= \left\{
    \begin{array}{l}
        2 \leftarrow not\;3\\
        3 \leftarrow not\;1\\
        1 \leftarrow not\;2
    \end{array}
    \right\},\quad (not\;\pi_3)^6=\pi_3^6= \left\{
    \begin{array}{l}
        2 \leftarrow 2\\
        3 \leftarrow 3\\
        1 \leftarrow 1
    \end{array}
    \right\}.
\end{align*}


In total, we have thus shown:

\begin{proposition} The negative $n$-permutation $not\;\pi_n$ has index 1 and period
\begin{align*} 
    period(not\;\pi_n)=\begin{cases}
        n & \text{if $n$ is even},\\
        2n & \text{if $n$ is odd}.
    \end{cases}
\end{align*} That is,
\begin{align*} 
    period(not\;\pi_n)=\begin{cases}
        period(\pi_n) & \text{if $n$ is even},\\
        2\times period(\pi_n) & \text{if $n$ is odd}.
    \end{cases}
\end{align*}
\end{proposition}

\begin{remark} Notice that the period of $not\;\pi_n$ is always even!
\end{remark}

This means that the cyclicality of $not\;\pi_n$ is identical to that of $\pi_n$ iff $n$ is even, and it differs by a factor of 2 iff $n$ is odd; this shows that positive and negative programs are different when it comes to cycles which coincides with our intuition.

\section{Aperiodic programs}\label{§:Aperiodic}

In this brief final technical section of the paper we show how the algebraization of answer set programming provided in the previous sections can help to find interesting novel classes of programs. The following definition is a simple instantiation of the concept of an aperiodic element of a semigroup\footnote{See e.g. \citeA{Howie03}.} in the ASP-setting:

\begin{definition} A program $P$ is \textit{\textbf{aperiodic}} iff $P^{n+1}=P^n$ for some $n\geq 1$.
\end{definition}

Given the intuitive relationship between the period of a program and its cyclicality (see \prettyref{§:Index}), we can say that aperiodic programs are in a sense ``acyclic'' and in fact it is easy to see that acyclic Horn programs are aperiodic (but see \prettyref{e:not_aperiodic}):

\begin{fact} Every acyclic propositional Horn program is aperiodic.
\end{fact} 

The following example shows that the class of aperiodic Horn programs strictly contains the acyclic ones:

\begin{example}\label{e:not_aperiodic} The program
\begin{align*} 
    P := \left\{
    \begin{array}{l}
        a\\
        b \leftarrow a\\  
        b \leftarrow a,b
    \end{array}
    \right\}
\end{align*} satisfies
\begin{align*} 
    P^3 = P^2 = \left\{
    \begin{array}{l}
        a\\
        b\\
        b \leftarrow a\\
        b \leftarrow a,b
    \end{array}
    \right\}
\end{align*} and thus is aperiodic despite containing the irrelevant cyclic rule $b \leftarrow a,b$.
\end{example}

Given the counterexample to an acyclic program in \prettyref{e:not_aperiodic}, it seems to us that the notion of an aperiodic program better captures the intuition of an ``acyclic'' program than the usual definition in terms of level mappings.

The least model of an aperiodic Horn program is particularly simple to compute:

\begin{fact} For an aperiodic Horn program $H$ with $H^{n+1}=H^n$, we have $H^\omega=facts(H^n)$.
\end{fact}

\section{Future work}\label{§:FW}

In the future, we plan to extend the constructions and results of this paper to wider classes of answer set programs as, for example, \textbf{higher-order programs} \cite{Miller12} and \textbf{disjunctive programs} \cite{Eiter97}. The former task is non-trivial since function symbols require most general unifiers in the definition of composition and give rise to \textit{infinite} algebras, whereas disjunctive rules yield non-deterministic behavior which is more difficult to handle algebraically. Nonetheless, we expect interesting results to follow in all of the aforementioned cases.

In \prettyref{§:Aperiodic} we have introduced the class of \textbf{aperiodic answer set programs} strictly containing the acyclic ones and it is interesting to study this class of programs from a purely mathematical and from a practical point of view.

From an artificial intelligence perspective, it is interesting to apply the abstract algebraic framework of analogical proportions of the form ``$a$ is to $b$ as $c$ is to $d$'' in \citeA{Antic22} to ASP algebras as defined in this paper for \textbf{answer set program synthesis by analogy}; \citeA{Antic23-23} did exactly that for first-order Horn logic programs. For this it will be of central importance to study (sequential) \textbf{decompositions} of various program classes. Specifically, we wish to compute decompositions of arbitrary answer set programs (and extensions thereof) into ``prime'' programs, where we expect permutations (\prettyref{§:pi}) to play a fundamental role in such decompositions. For this, it will be necessary to resolve the issue of a \textbf{``prime'' or indecomposable answer set program}. Algebraically, it will be of central importance to study Green's relations \cite{Green51} in the finite ASP magmas and algebras introduced in this paper. From a practical point of view, a mathematically satisfactory theory of program decompositions is relevant to modular knowledge representation and optimization of reasoning.

Corollaries \ref{c:strong} and \ref{c:uniform} are the entry point to an algebraization of the notions of strong \cite{Lifschitz01} and uniform equivalence \cite{Eiter03}, related to \citeS{Truszczynski06} operational characterization. This line of research is related to modular answer set programming \cite{Oikarinen06a} and in the future we wish to express algebraic operations for modular program constructions within our framework.

Approximation Fixed Point Theory (AFT) \cite{Denecker12,Denecker04} is an operational framework based on \citeS{Fitting02} work on fixed points in logic programming relating different semantics of logical formalisms with non-monotonic entailment \cite<cf.>{Pelov07,Antic13,Antic20}. It is interesting to try to \textbf{\textit{syntactically} reformulate AFT} within the ASP algebra introduced in this paper. For this, the first step will be to redefine approximations as pairs of programs instead of lattice elements satisfying certain conditions. In \prettyref{§:Transformation_ASP}, we mentioned systematic algebraic transformations of arbitrary answer set programs for which the current tools are not sufficient. More precisely, manipulating rule bodies requires a finer separation of the positive and negative parts of rules and programs in the vein of approximations in ``syntactic AFT'' mentioned before. 


Finally, in the future we wish to compare the sequential composition of answer set programs to the \textbf{cascade product} as introduced in \citeA{Antic14} from an algebraic point of view.

\section{Conclusion}\label{§:Conclusion}

This paper contributed to the foundations of answer set programming by introducing and studying the sequential composition of answer set programs. We showed in our main structural result (\prettyref{t:ASP}) that the space of all programs forms a finite unital magma with respect to composition ordered by set inclusion, which distributes from the right over union. We called the magmas induced by sequential composition \textit{ASP magmas}, and we called the algebras induced by sequential composition and union \textit{ASP algebras}. Moreover, we showed that the restricted class of Krom programs is distributive and therefore its proper instance forms an idempotent semiring (\prettyref{t:Krom}). These results extended the results of \citeA{Antic21-1} from Horn to answer set programs. From a logical point of view, we obtained an algebraic meta-calculus for reasoning about answer set programs. Algebraically, we obtained a correspondence between answer set programs and finite magmas, which enables us to transfer algebraic concepts to the logical setting. We have introduced the index and period of a program as an algebraic measure of its cyclicality and obtained the novel class of aperiodic programs strictly generalizing the acyclic programs. In a broader sense, this paper is a further step towards an algebra of rule-based logical theories and we expect interesting concepts and results to follow.

\bibliographystyle{theapa}
\bibliography{/Users/christianantic/Bibdesk/Bibliography,/Users/christianantic/Bibdesk/Publications_J,/Users/christianantic/Bibdesk/Publications_C,/Users/christianantic/Bibdesk/Preprints,/Users/christianantic/Bibdesk/Submitted,/Users/christianantic/Bibdesk/Notes}
\end{document}